\newcommand{\rmnum}[1]{\romannumeral #1}
\newcommand{\Rmnum}[1]{\expandafter\@slowromancap\romannumeral #1@}
\newtheorem{theorem}{Theorem}
\newtheorem{lemma}{Lemma}
\newtheorem{claim}{Claim}
\newtheorem{definition}{Definition}
\newtheorem{assumption}{Assumption}
\newtheorem{remark}{Remark}
\newenvironment{proof}{{\noindent\it Proof}.}{\hfill $\square$\par} 
\def\nf#1{{\color{green} #1}}
\newcommand{\hide}[1]{}
\title{
    A Novel Sequential Coreset Method for  Gradient Descent Algorithms}
\author[1]{\textbf{Jiawei Huang}}
\author[2]{\textbf{Ruomin Huang}}
\author[1]{\textbf{Wenjie Liu}}
\author[1]{\textbf{Nikolaos M. Freris}}
\author[1]{\textbf{Hu Ding\thanks{Corresponding author.}}}
\affil[1]{School of Computer Science and Technology} \affil[2]{School of Data Science}
\affil[ ]{University of Science and Technology of China}
\affil[ ]{\texttt{\{hjw0330, hrm, lwj1217\}}\texttt{@mail.ustc.edu.cn}, \texttt{\{nfr,\href{mailto:huding@ustc.edu.cn}{huding\}@ustc.edu.cn}}}
\begin{document}
\date{}
\maketitle

\begin{abstract}
A wide range of optimization problems arising in machine learning can be solved by gradient descent algorithms, and a central question in this area is how to efficiently compress a large-scale dataset so as to reduce the computational complexity. {\em Coreset} is a popular data compression technique that  has been extensively studied before. However, most of existing coreset methods are problem-dependent and cannot be used as a general tool for a broader range of applications. A key obstacle is that they often rely on the pseudo-dimension and total sensitivity bound that can be very high or hard to obtain. In this paper, based on the ``locality'' property of gradient descent algorithms, we propose a new framework, termed ``sequential coreset'', which effectively avoids these obstacles. Moreover, our method is particularly suitable for sparse optimization whence the coreset size can be further reduced to be only poly-logarithmically dependent on the dimension. In practice, the experimental results suggest that our method can save a large amount of running time compared with the baseline algorithms.
\end{abstract}

\section{Introduction}
\label{sec-intro}

{\em Coreset}~\cite{DBLP:journals/widm/Feldman20} is a popular technique for compressing large-scale datasets so as to speed up existing algorithms. Especially for the optimization problems arising in machine learning, coresets have been extensively studied in recent years. Roughly speaking, given a large dataset $P$ and a specified optimization objective ({\em e.g.,} $k$-means clustering), the coreset approach is to construct a new dataset $\tilde{P}$ with the size $|\tilde{P}|\ll |P|$, such that any solution obtained over $\tilde{P}$ will approximately preserve the same quality over the original set $P$; that is, we can replace $P$ by $\tilde{P}$ when running an available algorithm for solving this optimization problem. Because $|\tilde{P}|\ll |P|$, the runtime can be significantly reduced.

In this paper, we consider {\em Empirical Risk Minimization (ERM)} problems which capture a broad range of applications in machine learning~\cite{DBLP:conf/nips/Vapnik91}. Let $\mathbb{X}$ and $\mathbb{Y}$ be the data space and response space, respectively. 
Given an input training set $P=\{(x_1, y_1), (x_2, y_2), \cdots, (x_n, y_n)\}$, where each $x_i\in \mathbb{X}$ and each $y_i\in \mathbb{Y}$, 
the objective is to learn the hypothesis $\beta$ (from the hypothesis space $\mathbb{R}^d$) so as to minimize the {\em empirical risk} 
\begin{eqnarray}
F(\beta)=\frac{1}{n}\sum^n_{i=1}f(\beta, x_i, y_i), \label{for-er}
\end{eqnarray}
where $f(\cdot, \cdot, \cdot)$ is the non-negative real-valued \emph{loss function}. In practice, the data size $n$ can be very large, thus it is instrumental to consider data compression methods (like coresets) to reduce the computational complexity.

Let $\epsilon\in [0,1]$. A standard \textbf{$\epsilon$-coreset} is represented as a vector $W=[w_1, w_2, \cdots, w_n]\in\mathbb{R}^n$ 
with the property that the function $\tilde{F}(\beta)=\frac{1}{\sum^n_{i=1}w_i}\sum^n_{i=1}w_i f(\beta, x_i, y_i)$ must satisfy 
\begin{eqnarray}
\tilde{F}(\beta)\in (1\pm\epsilon)F(\beta), \ \forall \beta\in\mathbb{R}^d. \label{for-orcoreset}
\end{eqnarray}
The number of non-zero entries $\|W\|_0$ is the coreset size, and thus 
the goal of compression is to have $W$ to be as sparse as possible. 
Suppose we have an algorithm that can achieve $c$-approximation for the ERM problem ($c\geq 1$). Then, we can run the same algorithm on the $\epsilon$-coreset, and let $\hat{\beta}$ be the returned $c$-approximation, {\em i.e.,} $\tilde{F}(\hat{\beta})\leq c\cdot \min_{\beta\in\mathbb{R}^d}\tilde{F}(\beta)$. It holds that $F(\hat{\beta})\leq c\cdot\frac{1+\epsilon}{1-\epsilon}\min_{\beta\in\mathbb{R}^d} F(\beta)$, 

thus the approximation ratio of $\hat{\beta}$ for the original objective function $F(\cdot)$ is only slightly worse than $c$ when $\epsilon$ is sufficiently small.

A large part of coreset methods are based on the ``sensitivity'' idea~\cite{langberg2010universal}.  First, it computes a constant factor approximation with respect to the objective function (\ref{for-er}); 
then it estimates the sensitivity $\sigma_i$ for each data item $(x_i, y_i)$ based on the obtained constant factor approximation; finally, it takes a random sample (as the coreset) over the input set $P$, where each data item $(x_i, y_i)$ is selected with probability proportional to its sensitivity $\sigma_i$, and the total sample size depends on the total sensitivity bound $\sum^n_{i=1}\sigma_i$ along with  the ``pseudo-dimension'' of the objective function~\cite{DBLP:conf/stoc/FeldmanL11,li2001improved}. This sensitivity-based coreset framework has been  successfully applied to solve  problems such as $k$-means clustering and projective clustering~\cite{DBLP:conf/stoc/FeldmanL11}. 
However, there are several obstacles when trying to apply this approach to general ERM problems. For instance, it is not easy to obtain a constant factor approximation; moreover, different from clustering problems, it is usually challenging to achieve a reasonably low total sensitivity bound and compute the pseudo-dimension for many practical ERM problems. For example, the coreset size can be as large as $\tilde{\Omega}(d^2\sqrt{n}/\epsilon^2)$ for logistic regression~\cite{DBLP:conf/nips/TukanMF20} with $O(nd^2)$ construction time.

Another common class of coreset construction methods is based on ``greedy selection''~\cite{DBLP:conf/iclr/ColemanYMMBLLZ20,DBLP:conf/icml/MirzasoleimanBL20}. The greedy selection procedure is quite similar to the {\em $k$-center clustering} algorithm~\cite{G85} and the greedy {\em submodular set cover} algorithm~\cite{DBLP:journals/combinatorica/Wolsey82}. Intuitively, the method greedily selects a subset of the input training set, {\em i.e.,} the coreset, which are expected to be as diverse as possible; consequently, the whole training set can be covered by small balls centered at the selected subset. Nonetheless, this approach also suffers from several drawbacks. First, it is difficult to bound the size of the obtained coreset, when specifying the error bound induced by the coreset ({\em e.g.,} one may need too many balls to cover the training set if their radii are required to be no larger than an upper bound). Second, the time complexity can be too high, {\em e.g.,} the greedy $k$-center clustering procedure usually needs to read the input training set for a large number of passes, and the greedy submodular set cover algorithm usually needs a large number of function evaluations. 
\subsection{Our Contributions}
\label{sec-our}

The aforementioned issues seriously limit the applications of coresets in practice. 
In this paper, we propose a novel and easy-to-implement coreset framework, termed {\em sequential coreset},  for the general ERM problem (\ref{for-er}). 
Our idea comes from a simple observation. For many ERM problems, either convex or non-convex, gradient descent algorithms are commonly invoked. In particular, these gradient descent algorithms usually share the following \textbf{locality property}:

{\em \hspace{2em}Since the learning rate of a gradient descent algorithm is usually restricted by an upper bound, the trajectory of the hypothesis $\beta$ in (\ref{for-er}) is likely to be ``smooth'' (except for the first few  rounds). That is, the change of $\beta$ should be ``small'' between successive rounds. }

This allows to focus, in each round, on a local region rather than the whole hypothesis space $\mathbb{R}^d$. We can thus visualize the trajectory to be decomposed into a sequence of ``segments'', where each segment is bounded by an individual ball. See Figure~\ref{fig-local} for an illustration. When the trajectory enters a new ball ({\em i.e.,} a new local region), we construct a coreset $W=[w_1, w_2, \cdots, w_n]$ with
\begin{eqnarray}
\tilde{F}(\beta)\in (1\pm\epsilon)F(\beta), \forall \beta\in\text{the current local region}. \label{for-loccoreset}
\end{eqnarray}
The formal definition of such a ``local'' coreset is shown in Section~\ref{sec-pre}. When the trajectory approaches the boundary of the ball, we update the coreset for the next ball. Therefore, we call the method ``sequential coreset''.

\begin{figure}[]
    \centering
  \includegraphics[height=1.7in]{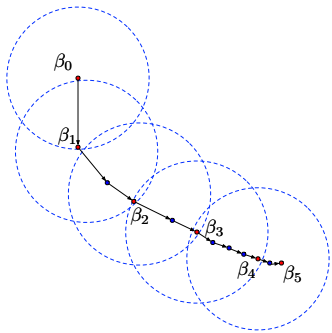}
      \caption{Suppose the trajectory starts from $\beta_0$, and we construct a ``local'' coreset within the ball centered at $\beta_0$; when the trajectory approaches the ball's boundary, {\em i.e.,} $\beta_1$, we update the coreset. Similarly, we update the coreset at $\beta_2$, $\beta_3$, and $\beta_4$,  until sufficiently approximating the stationary point $\beta_5$. We can view $\beta_1$, $\beta_2$, $\beta_3$, and $\beta_4$ as a sequence of ``anchors''. }
  \label{fig-local}
\end{figure}

Although building the coreset for a local region is easier than that for the global hypothesis space, there remain several technical challenges to resolve. Partly inspired by the layered sampling idea of~\cite{chen2009coresets,DBLP:conf/icml/DingW20}, we can achieve a coreset of (\ref{for-loccoreset})  where the coreset size depends on the range of the local region. In particular, our method enjoys several significant advantages compared with  previous coreset methods: 
\begin{itemize}
\item Our method is not problem-dependent and can be applied to any (convex or non-convex) ERM problem that uses gradient descent, under some mild assumptions. In fact, our method can be extended to apply to other iterative algorithms beyond gradient descent, such as subgradient descent and expectation maximization, as long as they satisfy the locality property.

\item Our method can avoid  to compute the total sensitivity bound and pseudo-dimension, thus it does not incur any complicated computations ({\em e.g.,} SVD) and has only linear construction time. 

\item For special cases of practical interest such as sparse optimization, the coreset size can be further reduced to be only poly-logarithmically dependent on the dimension. 

\end{itemize}
\subsection{Related Works}
\label{sec-related}

\textbf{Gradient descent.} Given a differentiable objective function, gradient descent is arguably the most common first-order iterative optimization algorithm for finding the optimal solution~\cite{gdcurry}. A number of ERM models can be solved via gradient descent methods, such as Ridge regression~\cite{ridge} and Logistic regression~\cite{CRAMER2004613}. Note that though the objective function of the Lasso regression~\cite{10.2307/2346178} is not differentiable, several natural generalizations of the traditional gradient descent method, such as subgradient methods~\cite{subgradient} and proximal gradient methods~\cite{DBLP:conf/pkdd/MosciRSVV10,FISTA}, have been developed and shown to perform well in practice. 

Several extensions of gradient descent have been also widely studied in recent years. For example,  Nesterov introduced the acceleration technique for achieving faster gradient method~\cite{nesterov1983method}.  
  
In view of the rapid development of deep learning and many other machine learning applications, stochastic gradient descent method  and variants have played a central role in the field of large-scale machine learning, due to their scalability to very large, possibly distributed datasets~\cite{DBLP:journals/siamrev/BottouCN18,DBLP:journals/corr/KingmaB14,DBLP:journals/jmlr/DuchiHS11}.

\textbf{Coresets.} Compared with other data compression approaches, an obvious advantage of coreset is that it is to be selected from the original input; that is, the obtained coreset can well preserve some favorable properties, such as  sparsity and interpretability, in the input domain. In the past years, coreset techniques have been widely applied to many optimization problems, such as:  clustering~\cite{chen2009coresets,DBLP:conf/stoc/FeldmanL11,huang2018epsilon}, logistic regression~\cite{huggins2016coresets,munteanu2018coresets,DBLP:conf/aistats/SamadianPMIC20,DBLP:conf/nips/TukanMF20,DBLP:conf/aistats/SamadianPMIC20}, Bayesian methods~\cite{DBLP:conf/icml/CampbellB18,DBLP:conf/nips/CampbellB19}, linear regression~\cite{dasgupta2009sampling,drineas2006sampling,DBLP:conf/icml/Chhaya0S20,DBLP:conf/aistats/KachamW20,DBLP:conf/nips/TukanMF20}, robust optimization~\cite{DBLP:conf/icml/DingW20}, Gaussian mixture model~\cite{lucic2017training}, and active learning~\cite{DBLP:conf/iclr/ColemanYMMBLLZ20,DBLP:conf/iclr/SenerS18}. Recently, \cite{DBLP:conf/nips/MaaloufJF19} also proposed the notion of ``accurate'' coresets, which do not introduce any approximation error when compressing the input dataset. 
Coresets are also applied to speed up large-scale or distributed machine learning algorithms~\cite{DBLP:conf/uai/ReddiPS15,DBLP:conf/icml/MirzasoleimanBL20,DBLP:conf/nips/MirzasoleimanCL20,DBLP:conf/nips/BorsosM020}. 

Very recently, \cite{DBLP:conf/aistats/RajMM20} also considered the ``local'' heuristic for coresets. However, their results are quite different from ours. Their method still relies on the problem-dependent pseudo-dimension and the sensitivities; moreover, their method requires the objective function to be strongly convex.

\textbf{Sketch.} Another widely used data summarization method is ``sketch''~\cite{DBLP:journals/corr/Phillips16}. 
 Different from coresets, sketch does not require to generate the summary from the original input. The sketch technique is particularly popular for solving linear regression problems (with and without regularization)~\cite{DBLP:conf/approx/AvronCW17,DBLP:conf/icml/ChowdhuryYD18}.

\section{Preliminaries}
\label{sec-pre}
Given an instance of the ERM problem (\ref{for-er}), we assume that the loss function is Lipschitz smooth. This is a quite common assumption for analyzing many gradient descent methods~\cite{acsent}.

\begin{assumption}[Lipschitz Smoothness]
\label{assump-lip}
There exists  a real constant $L >0$, such that for any $1\leq i\leq n$ and any $\beta_1, \beta_2 $ in the hypothesis space, we have
	\begin{equation}\label{eq:Lips}
	\|\nabla f(\beta_1,x_i, y_i) - \nabla f(\beta_2,x_i, y_i)\| \le L\|\beta_1 - \beta_2\|, 
	\end{equation}
	where  $\|\cdot\|$ is the Euclidean norm in the space. 

\end{assumption}

For simplicity, we just use ball to define the local region for constructing our coreset as (\ref{for-loccoreset}). Suppose we have the ``anchor'' $\beta_{\mathtt{anc}}\in \mathbb{R}^d$ and region range $R\geq 0$. Let $\mathbb{B}(\beta_{\mathtt{anc}}, R)$ denote the ball centered at $\beta_{\mathtt{anc}}$ with radius $R$. Below, we provide the formal definition for the ``local'' coreset  in (\ref{for-loccoreset}). 
 
\begin{definition}[Local $\epsilon$-Coreset]
\label{def-local}
Let $P=\{(x_1, y_1),$ $ (x_2, y_2),$$ \cdots, (x_n, y_n)\}$ be an input dataset of the ERM problem (\ref{for-er}). Suppose $\epsilon\in (0,1)$. Given $\beta_{\mathtt{anc}}\in \mathbb{R}^d$ and $R\geq 0$, the local $\epsilon$-coreset, denoted $\mathcal{CS}_\epsilon(\beta_{\mathtt{anc}}, R)$, is a vector $W=[w_1, w_2, \cdots, w_n]$ satisfying that
\begin{eqnarray}
\tilde{F}(\beta)\in (1\pm\epsilon)F(\beta), \hspace{0.2in}\forall \beta\in\mathbb{B}(\beta_{\mathtt{anc}}, R), \label{for-loccoreset2}
\end{eqnarray}
where $\tilde{F}(\beta)=\frac{1}{\sum^n_{i=1}w_i}\sum^n_{i=1}w_i f(\beta, x_i, y_i)$. The number of non-zero entries of $W$ is the size of $\mathcal{CS}_\epsilon(\beta_{\mathtt{anc}}, R)$.
\end{definition}

\section{Local $\epsilon$-Coreset Construction}
\label{sec-construction}

We first present the construction algorithm for local $\epsilon$-coreset, and expose the detailed analysis on its quality in Section~\ref{sec-ana}. Besides the quality guarantee of (\ref{for-loccoreset2}), in Section~\ref{sec-grad} we show that our coreset can approximately preserve the gradient $ \nabla F (\beta)$, which is an important property for  gradient descent algorithms. In Section~\ref{sec-beyond}, we discuss some extensions beyond gradient descent. Relying on the local $\epsilon$-coreset, we propose the sequential coreset framework and consider several important applications in Section~\ref{sec-app}. 

\begin{algorithm}[tb]
	\caption{Local $\epsilon$-Coreset Construction}
	\label{alg: local coreset}
	\begin{algorithmic}
		\STATE {\bfseries Input:} A training dataset $P=\{(x_1, y_1),$ $ (x_2, y_2),  \cdots, (x_n, y_n)\}$, the Lipschitz constant $L$ as described in Assumption~\ref{assump-lip}, $\beta_{\mathtt{anc}}\in \mathbb{R}^d$, and the  parameters $R\geq 0$ and $\epsilon\in (0,1)$.
		\begin{enumerate}
		\item Let $N=\lceil\log n\rceil$ and $H=F(\beta_{\mathtt{anc}})$; initialize $W=[0, 0, \cdots, 0]\in \mathbb{R}^n$.
		\item The set $P$ is partitioned into $N+1$ layers $\{P_{0}, \hdots, P_N\}$ as in (\ref{for-layer1}) and (\ref{for-layer2}). 
		\item For each $P_j\ne\emptyset$, $0\leq j\leq N$:
		\begin{enumerate}
		\item  take a random sample $Q_j$ from $ P_j$ uniformly at random, where the size $|Q_j|$ depends on the parameters $\epsilon$, $R$, and $L$ (the exact value will be discussed in our following analysis in Section~\ref{sec-ana});
		\item  for each sampled data item $(x_i, y_i)\in Q_j$,  assign the weight  $w_i=\frac{|P_j|}{|Q_j|}$;
		\end{enumerate}
		\end{enumerate}

		\STATE {\bfseries Output:} the weight vector $W=[w_1, w_2, \cdots, w_n]$ as the coreset.
	\end{algorithmic}
\end{algorithm}

\textbf{Coreset construction.} Let $N=\lceil \log n \rceil$ (the basis of the logarithm is 2 in this paper). Given the central point $\beta_{\mathtt{anc}}\in\mathbb{R}^d$ and the local region range ({\em i.e.,} the radius) $R\geq0$, we set $H=F(\beta_{\mathtt{anc}})$ and then partition the input dataset  $P=\{(x_1, y_1),$ $ (x_2, y_2),  \cdots, (x_n, y_n)\}$ into $N+1$ layers: 
\begin{eqnarray}
 P_0&=&\big\{(x_i, y_i)\in P\mid f(\beta_{\mathtt{anc}}, x_i, y_i) \le H\big\}, \label{for-layer1}\\
P_j&=&\big\{(x_i, y_i)\in P\mid 2^{j-1}H < f(\beta_{\mathtt{anc}}, x_i, y_i) \le\nonumber\\
&& \hspace{0.4in} 2^{j}H\big\}, 1 \le j \le N.\label{for-layer2}
 \end{eqnarray}	
 It is easy to see that $P=\cup^N_{j=0}P_j$, since $f(\beta_{\mathtt{anc}}, x_i, y_i)$ is always no larger than $2^{N}H$ for any $1\leq i\leq n$. 
For each $0\leq j\leq N$, if $P_j\ne\emptyset$, we take a random sample $Q_j$ from $ P_j$ uniformly at random, where the size $|Q_j|$ will be determined in our following analysis (in Section~\ref{sec-ana});  
for each sampled data item $(x_i, y_i)\in Q_j$, we assign the weight  $w_i$ to be   $\frac{|P_j|}{|Q_j|}$; for all the data items of $P_j\setminus Q_j$, we let their weights to be $0$. 
At the end, we obtain the weight vector $W=[w_1, w_2, \cdots, w_n]$ as our coreset, and consequently $\tilde{F}(\beta)=\frac{1}{n}\sum^n_{i=1}w_i f(\beta, x_i, y_i)$ (it is easy to verify $\sum^n_{i=1}w_i=n$ from our construction). The construction procedure is shown in Algorithm~\ref{alg: local coreset}. 
\begin{remark}
\label{rem-construction}
Our layered sampling procedure in Algorithm~\ref{alg: local coreset} is similar to the coreset construction idea of \cite{chen2009coresets,DBLP:conf/icml/DingW20}, which was originally designed for the $k$-median/means clustering problems. Compared with the sensitivity based coreset construction idea~\cite{langberg2010universal}, a significant advantage of our method is that there is no need to compute the total sensitivity bound and  pseudo-dimension. These values are problem-dependent and, for some objectives, they can be very high or hard to obtain~\cite{munteanu2018coresets,DBLP:conf/nips/TukanMF20}.  
 
\end{remark}

\subsection{Theoretical Analysis}
\label{sec-ana}
In this section, we prove the quality guarantee and complexity of the coreset returned from Algorithm~\ref{alg: local coreset}. We define two values before presenting our theorem,  
$M\coloneqq \max\limits_{1\leq i\leq n}\|\nabla f(\beta_{\mathtt{anc}},x_i,y_i)\|$ and $m\coloneqq \!\!\! \min\limits_{\beta \in \mathbb{B}(\beta_{\mathtt{anc}},R)}F(\beta)$.

	\begin{theorem}
	\label{local_coreset}
	With problability $1-\frac{1}{n}$, Algorithm \ref{alg: local coreset} returns a qualified coreset  $\mathcal{CS}_\epsilon(\beta_{\mathtt{anc}}, R)$ with size  $\tilde{O}\left(\left(\frac{H+MR+LR^2}{m}\right)^2\cdot\frac{d}{\epsilon^2}\right)$\footnote{
		$\tilde{O}(g)\coloneqq O\left(g\cdot \mathtt{polylog}\left(\frac{nHMR}{\epsilon m}\right)\right)$.}. Furthermore, when the vector $\beta$ is restricted to have at most $k\in \mathbb{Z}^+$ non-zero entries in the hypothesis space $\mathbb{R}^d$, the coreset size can be reduced to be $\tilde{O}\left(\left(\frac{H+MR+LR^2}{m}\right)^2\cdot\frac{k\log d}{\epsilon^2}\right)$. The runtime of Algorithm \ref{alg: local coreset} is $O(n\cdot t_f)$, where $t_f$ is the time complexity for computing the loss $f(\beta, x, y)$. 	

\end{theorem}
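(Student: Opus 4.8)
The plan is to show that $\tilde F$ is an unbiased, well-concentrated estimator of $F$ at every fixed $\beta$ in the ball, and then to upgrade this pointwise guarantee to a uniform one over all of $\mathbb{B}(\beta_{\mathtt{anc}},R)$ via an $\eta$-net combined with the Lipschitz continuity implied by Assumption~\ref{assump-lip}. First I would verify \emph{unbiasedness}: since within each layer $P_j$ we draw $Q_j$ uniformly and reweight by $|P_j|/|Q_j|$, each item satisfies $\mathbb{E}[w_i]=1$, so $\mathbb{E}[\tilde F(\beta)]=F(\beta)$ for every fixed $\beta$. The central structural fact to establish next is the \emph{geometric decay of the layer sizes}: because $\sum_i f(\beta_{\mathtt{anc}},x_i,y_i)=nH$ while every item in $P_j$ ($j\ge 1$) has $f(\beta_{\mathtt{anc}},x_i,y_i)>2^{j-1}H$, a counting argument yields $|P_j|\le 2n/2^{j}$. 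This is precisely what makes the layered estimator cheap.

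To bound the variance at a fixed $\beta\in\mathbb{B}(\beta_{\mathtt{anc}},R)$, I would invoke the standard $L$-smoothness inequality $f(\beta,x_i,y_i)\le f(\beta_{\mathtt{anc}},x_i,y_i)+MR+\tfrac{L}{2}R^2$, which bounds the loss of every item of $P_j$ at $\beta$ by $U_j:=2^{j}H+MR+\tfrac{L}{2}R^2\le 2^{j}(H+MR+LR^2)$. Writing $T:=H+MR+LR^2$ and combining $U_j\le 2^{j}T$ with $|P_j|\le 2n/2^{j}$, each per-sample term of the layer-$j$ estimator is at most $\tfrac{1}{n}\cdot\tfrac{|P_j|}{|Q_j|}U_j=O(T/|Q_j|)$, so the $2^{j}$ factors cancel. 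Treating the draws as independent (sampling with replacement; negative association only helps otherwise) and taking $|Q_j|=s$ equal across the $O(\log n)$ nonempty layers, the total squared range telescopes to $O(T^2\log n/s)$ with no surviving exponential factor. A Bernstein/Hoeffding bound then gives, for fixed $\beta$,
\[
\Pr\!\big[\,|\tilde F(\beta)-F(\beta)|>\epsilon m\,\big]\le 2\exp\!\Big(-\Omega\big(\epsilon^2 m^2 s/(T^2\log n)\big)\Big),
\]
and since $m\le F(\beta)$ on the ball, this additive $\epsilon m$ error is the desired relative $\epsilon F(\beta)$ error.

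The main obstacle, and the step I would treat most carefully, is passing from this pointwise statement to a guarantee that holds simultaneously for \emph{all} $\beta$ in the ball. I would build an $\eta$-net $\mathcal{N}$ of $\mathbb{B}(\beta_{\mathtt{anc}},R)$ of size $(O(R/\eta))^{d}$, union-bound the displayed inequality over $\mathcal{N}$, and then extend to arbitrary $\beta$ using that both $F$ and $\tilde F$ are $(M+LR)$-Lipschitz on the ball: by smoothness every $\|\nabla f(\beta,x_i,y_i)\|\le M+LR$, and $\tilde F$ is a convex combination of such losses. Choosing $\eta=\Theta\big(\epsilon m/(M+LR)\big)$ makes the discretization error negligible, so closeness on $\mathcal{N}$ forces closeness everywhere. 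Setting the per-point failure probability to $1/(n|\mathcal{N}|)$ yields total failure $\le 1/n$ and requires $\log(n|\mathcal{N}|)=O\big(\log n+d\log(R/\eta)\big)$ in the exponent, hence $s=\tilde O\big((T/m)^2 d/\epsilon^2\big)$ and total size $(\log n)\,s=\tilde O\big((T/m)^2 d/\epsilon^2\big)$; the $\mathtt{polylog}(nHMR/(\epsilon m))$ absorbs the $\log(R/\eta)$ and $\log n$ factors.

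For the $k$-sparse case I would only change the net: the union of $k$-dimensional balls over the $\binom{d}{k}$ possible supports admits a net of size $\binom{d}{k}(O(R/\eta))^{k}$, so $\log|\mathcal{N}|=O\big(k\log d+k\log(R/\eta)\big)$, which replaces the $d$ by $k\log d$ inside the $\tilde O$ while leaving the rest of the argument intact. Finally, the \emph{runtime} is dominated by the single pass that evaluates $f(\beta_{\mathtt{anc}},x_i,y_i)$ for all $i$ to compute $H=F(\beta_{\mathtt{anc}})$ and form the partition in (\ref{for-layer1})--(\ref{for-layer2}), costing $O(n\cdot t_f)$; the subsequent sampling and reweighting touch only $O(\sum_j|Q_j|)\ll n$ items and are therefore lower order.
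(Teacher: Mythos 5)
Your proposal is correct and follows essentially the same route as the paper: layered sampling keyed to the losses at $\beta_{\mathtt{anc}}$, a Hoeffding bound at a fixed $\beta$ using the smoothness bound $f_i(\beta)\in f_i(\beta_{\mathtt{anc}})\pm(MR+\tfrac{L}{2}R^2)$, a union bound over an $\eta$-net of the ball followed by a Lipschitz extension with constant $M+LR$, and the union of $\binom{d}{k}$ subspaces for the sparse case. The only cosmetic difference is that you aggregate across layers with a single Hoeffding application, using the Markov-type bound $|P_j|\le 2n/2^{j}$ to make every weighted summand $O(T/s)$, whereas the paper runs Hoeffding per layer with error budget $\epsilon_1 2^{j-1}H$ and sums via the equivalent counting fact $\sum_j |P_j|2^{j}\le 3n$ (Claim~\ref{claim_1}); both cancel the $2^{j}$ factors in the same way and yield the same $\tilde O$ bound.
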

\begin{remark}
\label{rem-the-quality}
From Theorem~\ref{local_coreset} we can see that the coreset size depends on the initial vector $\beta_{\mathtt{anc}}$ and the local region range $R$. Also note that  the value $m$ is non-increasing with $R$. 
\end{remark}
 First, the linear time complexity of Algorithm \ref{alg: local coreset} is easy to see: to obtain the partition and the samples, it just needs to compute $f(\beta_{\mathtt{anc}}, x_i, y_i)$ for $1\le i\le n$. 

Below, we  focus on proving the quality guarantee and coreset size.  For the sake of simplicity, we use $f_i(\beta)$ to denote $f(\beta,x_i,y_i)$ in our analysis. By using Taylor expansion and Assumption \ref{assump-lip}, we directly have 

\begin{eqnarray}\label{eq:Lip-bound}
f_i(\beta) 
&\in f_i(\beta_{\mathtt{anc}}) \pm \left(\frac{ \|\nabla f_i(\beta_{\mathtt{anc}})\|}{R}+\frac{L}{2}\right)R^2.\label{eqa: betabound}
\end{eqnarray}

for any $\beta \in \mathbb{B}(\beta_{\mathtt{anc}},R)$ and $1\leq i\leq n$. 
Then we have the following lemma. 
	\begin{lemma} 
	\label{lemma_hoeffding}
	We fix a vector  $\beta \in \mathbb{B}(\beta_{\mathtt{anc}},R)$ and an index $j$ from $\{0, 1, \hdots, N\}$. Given any two numbers $\lambda \in (0,1)$ 
	 and $\delta>0$, if we set the sample size  in Step 3(a) of Algorithm \ref{alg: local coreset} to be 
	\begin{eqnarray}\label{eq:coreset-size}
	|Q_j| = \ O\left((2^{j-1}H+MR+LR^2)^2\delta^{-2}\log\frac{1}{\lambda}\right), \label{for-hoeffding-qsize}
	\end{eqnarray}
we have

\begin{equation}
	\mathtt{Prob}\!\left[\left|\frac{1}{|{Q_j}|} \!\sum_{(x_i,y_i)\in {Q_j}} \!\! \!\! \! \! f_i(\beta)-\frac{1}{|P_j|} \!\sum_{(x_i,y_i)\in P_j}\!\!\!\! \!\! f_i(\beta)\right|\geq \delta\right] \!\!\leq\!\! \lambda.\nonumber
\end{equation}
	
\end{lemma}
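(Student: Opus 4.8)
The plan is to read Lemma~\ref{lemma_hoeffding} as a standard concentration statement: for a \emph{fixed} $\beta\in\mathbb{B}(\beta_{\mathtt{anc}},R)$ and a \emph{fixed} layer $P_j$, we are comparing the empirical average of the loss over the uniform sample $Q_j$ against the true average over $P_j$. Since $Q_j$ is drawn uniformly at random from $P_j$, the natural tool is Hoeffding's inequality. The key preparatory work is to verify that, for the fixed $\beta$, the quantity being averaged is confined to a short interval whose length is controlled by $2^{j-1}H+MR+LR^2$; once this range bound is in hand, the sample-size formula (\ref{for-hoeffding-qsize}) falls out of Hoeffding directly.

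First I would set up the random variable. Let $I$ be a uniformly random index drawn from $P_j$ and put $Z=f_I(\beta)$. By construction its expectation is exactly the layer average $\mu\coloneqq\frac{1}{|P_j|}\sum_{(x_i,y_i)\in P_j}f_i(\beta)$, and the sample average over $Q_j$ is an average of $|Q_j|$ independent copies of $Z$. Thus the event in the lemma is precisely the deviation of this empirical mean from $\mu$ exceeding $\delta$.

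The main step is the range bound. Fixing $\beta\in\mathbb{B}(\beta_{\mathtt{anc}},R)$ and any $(x_i,y_i)\in P_j$, I would combine the deterministic Taylor/Lipschitz bound (\ref{eq:Lip-bound}) with the two facts that $\|\nabla f_i(\beta_{\mathtt{anc}})\|\le M$ (by definition of $M$) and $f_i(\beta_{\mathtt{anc}})\le 2^j H$ (by the layer definition (\ref{for-layer1})--(\ref{for-layer2})). This yields the upper bound $f_i(\beta)\le 2^j H+MR+\frac{L}{2}R^2$, while non-negativity of the loss gives the lower bound $f_i(\beta)\ge 0$. Hence every value $f_i(\beta)$ with $(x_i,y_i)\in P_j$, and in particular every realization of $Z$, lies in an interval of length $D\le 2^j H+MR+\frac{L}{2}R^2=O\bigl(2^{j-1}H+MR+LR^2\bigr)$, where the $O(\cdot)$ simply absorbs the factor of two between $2^j H$ and $2^{j-1}H$ and covers the boundary layer $P_0$ uniformly.

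It then remains to apply Hoeffding and solve for $|Q_j|$. With $|Q_j|$ independent copies of $Z\in[0,D]$, Hoeffding's inequality gives $\mathtt{Prob}\bigl[\,\bigl|\frac{1}{|Q_j|}\sum_{(x_i,y_i)\in Q_j}f_i(\beta)-\mu\bigr|\ge\delta\,\bigr]\le 2\exp\bigl(-2|Q_j|\delta^2/D^2\bigr)$. Forcing the right-hand side below $\lambda$ and rearranging gives $|Q_j|\ge \frac{D^2}{2\delta^2}\log\frac{2}{\lambda}$, which with $D=O(2^{j-1}H+MR+LR^2)$ is exactly the claimed size (\ref{for-hoeffding-qsize}). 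I expect the only real obstacle to be the range bound rather than the concentration step: one must apply (\ref{eq:Lip-bound}) with the \emph{layer-specific} upper bound on $f_i(\beta_{\mathtt{anc}})$ in order to keep $D$ as small as $O(2^{j-1}H+MR+LR^2)$, since a crude global bound would instead introduce the largest layer's scale $2^N H$ and ruin the per-layer sample size. A secondary point to settle is whether the uniform sample is taken with or without replacement; if without, I would instead invoke Hoeffding's bound for sampling without replacement, which is no weaker than the i.i.d.\ version, so the identical sample-size bound persists.
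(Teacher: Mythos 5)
Your proposal is correct and follows essentially the same route as the paper: confine $f_i(\beta)$ for $(x_i,y_i)\in P_j$ to an interval of length $O(2^{j-1}H+MR+LR^2)$ via the Lipschitz/Taylor bound together with the layer definition, then apply Hoeffding's inequality and solve for $|Q_j|$. The only cosmetic difference is that the paper uses the layer-specific lower bound $2^{j-1}H-MR-\tfrac{1}{2}LR^2$ rather than your lower bound of $0$, which changes the interval length only by a constant factor absorbed in the $O(\cdot)$; your remark about sampling without replacement is a valid refinement the paper leaves implicit.
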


\begin{proof}
For a fixed $1\le j \le N$, we view   $f_i(\beta)$ as an independent random variable for each  $(x_i, y_i)\in P_j$. Through the partition construction (\ref{for-layer1}) and (\ref{for-layer2}), and the bounds (\ref{eqa: betabound}),  we have 
\begin{equation}
 \left.\begin{aligned}
 f_i(\beta)&\geq&2^{j-1}H-MR- \frac{1}{2}LR^2;  \\
f_i(\beta)&\leq& 2^jH +MR+ \frac{1}{2}LR^2. 
               \end{aligned}
  \hspace{0.2in}\right  \} \label{for-uplow}
  \qquad  
\end{equation}

	Let the sample size $|Q_j|=\lceil \frac{1}{2}(2^{j-1}H+LR^2+2MR)^2\delta^{-2}$ln$\frac{2}{\lambda}\rceil$.
	Through the Hoeffding's inequality~\cite{hoeffding1994probability}, we know that 

	\begin{eqnarray}
	\mathtt{Prob}\left[\left|\frac{1}{|{Q_j}|} \!\sum_{(x_i,y_i)\in {Q_j}} \! \! \! \! f_i(\beta)-\frac{1}{|P_j|} \!\sum_{(x_i,y_i)\in P_j}\!\!\! \! f_i(\beta)\right|\geq \delta\right] \nonumber
	\end{eqnarray}
	 is no larger than $2e^{-\frac{2|Q_j|\delta^2}{(2^{j-1}H+LR^2+2MR)^2}}\leq \lambda$. 
	 
	  Now we consider the case $j=0$. For any data item  $(x_i,y_i)\in P_0$, we have $0\leq f_i(\beta)\leq H+MR+\frac{1}{2}LR^2$. 
	 If letting the sample size $|Q_0|=\lceil \frac{1}{2}(H+\frac{1}{2}LR^2+MR)^2\delta^{-2}$ln$\frac{2}{\lambda}\rceil$, it is easy to verify that the same probability bound also holds. 

\end{proof}

After proving Lemma~\ref{lemma_hoeffding}, we further show  $\tilde{F}(\beta)\approx F(\beta)$ for any fixed $\beta\in \mathbb{B}(\beta_{\mathtt{anc}},R)$. 

\begin{lemma}
	\label{lemma2}
	Suppose $\epsilon_1\geq 0$. In Lemma~\ref{lemma_hoeffding}, if we set $\delta=\epsilon_12^{j-1}H$ for $j=0, 1, \cdots, N$, then for any fixed $ \beta \in \mathbb{B}(\beta_{\mathtt{anc}},R)$, 
	\begin{eqnarray}
	\Big|\tilde{F}(\beta) - F(\beta)\Big| \leq \frac{3}{2}\epsilon_1 F(\beta_{\mathtt{anc}})
	\end{eqnarray}	
	holds with probability at least $1-(N+1)\lambda$. 

\end{lemma}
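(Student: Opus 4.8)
The plan is to control $\tilde{F}(\beta) - F(\beta)$ layer by layer, exploiting the geometric structure of the partition $\{P_0,\dots,P_N\}$ so that the per-layer sampling errors, once weighted by the layer sizes, telescope against the total anchor loss $nH$. First I would rewrite both functions over the layers. Since $\sum_i w_i = n$ and $w_i = |P_j|/|Q_j|$ for each $(x_i,y_i)\in Q_j$, we have $\tilde{F}(\beta) = \frac{1}{n}\sum_{j=0}^N |P_j|\cdot\frac{1}{|Q_j|}\sum_{(x_i,y_i)\in Q_j} f_i(\beta)$ while $F(\beta) = \frac{1}{n}\sum_{j=0}^N \sum_{(x_i,y_i)\in P_j} f_i(\beta)$, so
\[
\tilde{F}(\beta) - F(\beta) = \frac{1}{n}\sum_{j=0}^N |P_j|\left(\frac{1}{|Q_j|}\sum_{(x_i,y_i)\in Q_j} f_i(\beta) - \frac{1}{|P_j|}\sum_{(x_i,y_i)\in P_j} f_i(\beta)\right).
\]

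Next I would invoke Lemma~\ref{lemma_hoeffding} separately on each nonempty layer $P_j$ with the prescribed choice $\delta = \epsilon_1 2^{j-1}H$, so that the parenthesized deviation is at most $\epsilon_1 2^{j-1}H$ with probability at least $1-\lambda$. A union bound over the at most $N+1$ layers makes all of these deviation bounds hold simultaneously with probability at least $1-(N+1)\lambda$, matching the claimed confidence. Conditioning on this event and applying the triangle inequality to the displayed decomposition gives $|\tilde{F}(\beta) - F(\beta)| \le \frac{\epsilon_1 H}{n}\sum_{j=0}^N |P_j|\,2^{j-1}$.

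The crux is then to bound the weighted layer-size sum $\sum_{j=0}^N |P_j|\,2^{j-1}$ by $\tfrac{3}{2}n$, and this is where the geometric layering does the real work. For each $j\ge 1$, every $(x_i,y_i)\in P_j$ satisfies $f_i(\beta_{\mathtt{anc}}) > 2^{j-1}H$ by the partition rule (\ref{for-layer2}), hence $|P_j|\,2^{j-1}H \le \sum_{(x_i,y_i)\in P_j} f_i(\beta_{\mathtt{anc}})$; summing over $j\ge 1$ and using disjointness of the layers yields $\sum_{j=1}^N |P_j|\,2^{j-1} \le \frac{1}{H}\sum_{i=1}^n f_i(\beta_{\mathtt{anc}}) = \frac{1}{H}\cdot nH = n$. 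The remaining $j=0$ term contributes only $|P_0|/2 \le n/2$, since no lower bound on $f_i(\beta_{\mathtt{anc}})$ is available in $P_0$ but trivially $|P_0|\le n$. Combining the two gives $\sum_{j=0}^N |P_j|\,2^{j-1} \le \tfrac{3}{2}n$, and substituting back produces $|\tilde{F}(\beta)-F(\beta)| \le \frac{\epsilon_1 H}{n}\cdot \tfrac{3}{2}n = \tfrac{3}{2}\epsilon_1 F(\beta_{\mathtt{anc}})$, as required.

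I expect the main obstacle to be conceptual rather than computational: recognizing that the error tolerance $\delta$ must be scaled \emph{geometrically} with the layer index (as $2^{j-1}H$) instead of chosen uniformly. This is precisely the design that lets the aggregated error stay proportional to $nH$ rather than blowing up with the exponentially large loss values sitting in the high layers. The $j=0$ layer, where items can have arbitrarily small loss and no matching lower bound exists, is the only place the constant degrades from $1$ to $\tfrac{3}{2}$; I would also note in passing that the case $\epsilon_1=0$ is degenerate (it would demand exact recovery) and is covered by the statement only in the limiting sense.
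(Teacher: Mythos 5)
Your proof is correct and follows essentially the same route as the paper's: the same layer-wise decomposition, the same union bound over the $N+1$ applications of Lemma~\ref{lemma_hoeffding} with $\delta=\epsilon_1 2^{j-1}H$, and the same key bound $\sum_{j=0}^N |P_j|2^{j-1}\le \tfrac{3}{2}n$ (the paper states it as Claim~\ref{claim_1}, $\sum_j |P_j|2^j\le 3n$, proved via the uniform bound $2^jH\le 2f_i(\beta_{\mathtt{anc}})+H$ rather than your explicit $j=0$ versus $j\ge 1$ split, but the content is identical).
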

\begin{proof}

 From Lemma \ref{lemma_hoeffding}, it holds that the probability that 
      \begin{eqnarray}
      \left|\frac{|P_j|}{|{Q_j}|} \sum_{(x_i,y_i)\in {Q_j}}f_i(\beta)- \sum_{(x_i,y_i)\in P_j}f_i(\beta)\right|\nonumber\\
      \geq |P_j|\cdot\epsilon_1 2^{j-1}H   \label{for-lem2-1}   
      \end{eqnarray}
      is at most $\lambda$. Recall $\tilde{F}(\beta)=\frac{1}{n}\sum^n_{i=1}w_i f(\beta, x_i, y_i)$, where for each $(x_i,y_i)\in P_j$, $w_i=\frac{|P_j|}{|Q_j|}$ if $(x_i,y_i)\in Q_j$, and $w_i=0$ if $(x_i,y_i)\in P_j\setminus Q_j$.  Thus, by taking the union bound of (\ref{for-lem2-1}) over $0\leq j\leq N$, we have
      \begin{eqnarray}
     && n\left|\tilde F(\beta)-F(\beta)\right|\nonumber\\
      &=&\left|\sum_{j=0}^{N}\frac{|P_j|}{|Q_j|}\sum_{(x_i,y_i)\in Q_j}f_i(\beta)-\sum_{j=0}^{N}\sum_{(x_i,y_i)\in P_j}f_i(\beta)\right|\nonumber\\
      &\leq&\sum_{j=0}^{N}\left|\frac{|P_j|}{|Q_j|}\sum_{(x_i,y_i)\in Q_j}f_i(\beta)-\sum_{(x_i,y_i)\in P_j}f_i(\beta)\right|\nonumber\\
      &\leq& \sum_{j=0}^{N}|P_j|\epsilon_1 2^{j-1}H \label{for-lem2-2}
                    \end{eqnarray}
      with probability at least $(1-\lambda)^{N+1}>1-(N+1)\lambda$. To complete the proof, we also need the following claim.
	\begin{claim}
\label{claim_1}
	$\sum_{j=0}^{N}|P_j|2^j \leq 3n$.
\end{claim}
\begin{proof}
	By the  definition of $P_j$, we have
	\begin{equation}
 \begin{aligned}
       2^jH &=H\hide{\nf{\ge f_i(\beta_{\mathtt{anc}}), \forall (x_i,y_i)\in P_j}},  &\text{if $j=0$; }\\
        2^jH&\leq 2f_i(\beta_{\mathtt{anc}}), \forall (x_i,y_i)\in P_j,  &\text{if $ j\geq 1$.}
                \end{aligned}
  \qquad  
\end{equation}
 	Therefore, $2^jH $ is always no larger than $ 2f_i(\beta_{\mathtt{anc}})+H$ for any $0\leq j\leq N$ and any  $(x_i,y_i)\in P_j$.

	 Overall, 
	\begin{equation}
	\begin{split}
	\begin{aligned}
	\sum_{j=0}^{N}|P_j|2^jH&=\sum_{j=0}^{N}\sum_{(x_i,y_i)\in P_j}2^jH \\
	&\leq \sum_{j=0}^{N}\sum_{(x_i,y_i)\in P_j}(2f_i(\beta_{\mathtt{anc}})+H)\\
	&=2nF(\beta_{\mathtt{anc}}) + nH=3nH. 
	\end{aligned}
	\end{split}
	\end{equation}
	Thus the claim $\sum_{j=0}^{N}|P_j|2^j \leq 3n$  
	is true. 
\end{proof}

     By  using Claim~\ref{claim_1}, (\ref{for-lem2-2}) can be rewritten as 
      \begin{eqnarray}
      n\big|\tilde F(\beta)-F(\beta)\big|\leq \frac{3}{2}\epsilon_1nF(\beta_{\mathtt{anc}}). 
       \end{eqnarray}
       So we complete the proof. 
   
\end{proof}
To prove $\tilde F(\beta)$ is a qualified coreset, we need to extend Lemma~\ref{lemma2} to any $\beta \in \mathbb{B}(\beta_{\mathtt{anc}},R)$. 

For this purpose, we discretize the region $\mathbb{B}(\beta_{\mathtt{anc}},R)$ first (the discretization is only used for our analysis, and we do not need to build the grid in reality). 
Imagine that we build a uniform grid inside $\mathbb{B}(\beta_{\mathtt{anc}},R)$ with the side length being equal to $\frac{\epsilon_2R}{\sqrt d}$, where the exact value of $\epsilon_2$ is to be determined later. 
Inside each grid cell of $\mathbb{B}(\beta_{\mathtt{anc}},R)$, we pick an arbitrary point as its representative point and let $G$ be the set consisting of all the representative points. Based on the formula of the volume of a ball in $\mathbb{R}^d$, we have 
\begin{eqnarray}\label{eq:ball}
|G|=\left(O\left(\frac{2\sqrt{\pi e}}{\epsilon_2}\right)\right)^d.\label{for-grid}
\end{eqnarray}

So we can simply increase the sample size of Lemma~\ref{lemma2}, and take the union bound over all $\beta\in G$ so as to extend the result as follows.

\begin{lemma}
\label{lem-union}
Suppose $\epsilon_1\geq 0$. In the sample size (\ref{for-hoeffding-qsize}) of Lemma~\ref{lemma_hoeffding}, we set $\delta=\epsilon_12^{j-1}H$ for $j=0, 1, \cdots, N$, respectively, and replace $\lambda$ by $\frac{\lambda}{(N+1)|G|}$. The following
	\begin{eqnarray}
	\big|\tilde{F}(\beta) - F(\beta)\big| \leq \frac{3}{2}\epsilon_1 F(\beta_{\mathtt{anc}})
	\end{eqnarray}	
	holds for any $\beta \in G$, with probability at least $1-\lambda$. 
\end{lemma}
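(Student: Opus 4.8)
The plan is to obtain Lemma~\ref{lem-union} as a direct union-bound strengthening of Lemma~\ref{lemma2}. The key observation is that Lemma~\ref{lemma2} already certifies, for a \emph{single} fixed $\beta \in \mathbb{B}(\beta_{\mathtt{anc}}, R)$, that $|\tilde{F}(\beta) - F(\beta)| \leq \frac{3}{2}\epsilon_1 F(\beta_{\mathtt{anc}})$ fails with probability at most $(N+1)\lambda$, where $\lambda$ is the per-layer parameter fed into the Hoeffding bound of Lemma~\ref{lemma_hoeffding}. Since $G$ contains only finitely many points (namely $|G|$ of them, as quantified in (\ref{for-grid})), it suffices to drive the per-point failure probability down far enough that a union bound over all of $G$ still leaves total failure probability at most $\lambda$.

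First I would re-run Lemma~\ref{lemma_hoeffding} verbatim, but with its parameter $\lambda$ replaced by $\lambda' := \frac{\lambda}{(N+1)|G|}$ and with $\delta = \epsilon_1 2^{j-1}H$ in layer $j$. This only changes the sample size (\ref{for-hoeffding-qsize}) through its $\log\frac{1}{\lambda'} = \log\frac{(N+1)|G|}{\lambda}$ factor; the deviation bounds (\ref{for-uplow}) and the Hoeffding estimate themselves are untouched. Consequently, for each fixed layer index $j$ and each fixed $\beta$, the event (\ref{for-lem2-1}) now has probability at most $\lambda'$.

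Next, for one fixed $\beta \in G$ I would repeat the argument of Lemma~\ref{lemma2} line for line: taking the union bound of (\ref{for-lem2-1}) over the $N+1$ layers and invoking Claim~\ref{claim_1} to bound $\sum_{j=0}^N |P_j| 2^j \leq 3n$, I get $|\tilde{F}(\beta) - F(\beta)| \leq \frac{3}{2}\epsilon_1 F(\beta_{\mathtt{anc}})$ with failure probability at most $(N+1)\lambda' = \frac{\lambda}{|G|}$. Finally I would take a second union bound, this time over the $|G|$ representative points $\beta \in G$, which yields a total failure probability of at most $|G| \cdot \frac{\lambda}{|G|} = \lambda$. This establishes the claimed bound simultaneously for all $\beta \in G$ with probability at least $1 - \lambda$.

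There is no real analytic obstacle here --- the whole argument is two nested union bounds stacked on top of Lemma~\ref{lemma2}. The only point that requires care is bookkeeping the event count: the correct normalization is $(N+1)|G|$ (layers times grid points), not $|G|$ alone, because the per-layer Hoeffding failures must also be absorbed into the budget. It is worth noting, looking ahead, that this is precisely the step where the ambient dimension $d$ enters the final coreset size: since $\log|G| = \Theta(d \log(1/\epsilon_2))$ by (\ref{for-grid}), replacing $\lambda$ with $\lambda'$ inflates each $|Q_j|$ by a factor of order $d$, which is the source of the $d/\epsilon^2$ dependence appearing in Theorem~\ref{local_coreset}.
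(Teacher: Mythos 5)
Your proposal is correct and matches the paper's own (very brief) argument exactly: the paper likewise obtains Lemma~\ref{lem-union} by replacing $\lambda$ with $\frac{\lambda}{(N+1)|G|}$ in the sample size of Lemma~\ref{lemma_hoeffding} and taking a union bound over the $|G|$ grid points on top of Lemma~\ref{lemma2}. Your bookkeeping of the failure probabilities ($(N+1)\lambda' = \lambda/|G|$ per point, then $\lambda$ overall) and your remark on where the factor $d$ enters via $\log|G|$ are both consistent with the paper's subsequent size computation.
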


Following Lemma~\ref{lem-union}, we further derive a uniform bound over all $\beta\in\mathbb{B}(\beta_{\mathtt{anc}},R)$ (not just in $G$). 
For any $\beta\in \mathbb{B}(\beta_{\mathtt{anc}},R)$, we let $\beta'\in G$ be the representative point of the cell containing $\beta$.
	Then we have $\|\beta - \beta'\|\leq \epsilon_2R$. We define $M'\coloneqq\max_{1\leq i \leq n}\max\limits_{\beta\in\mathcal{B}(\beta_{\mathtt{anc}},R)}\|\nabla f(\beta,x_i,y_i)\|$. By Assumption~\ref{assump-lip} we immediately know $M'\leq M+LR$. 	
By using  the similar manner of~\eqref{eq:Lip-bound},  for any $1\leq i\leq n$ we have
	\begin{eqnarray}
	\big|f_i(\beta)-f_i(\beta')\big|\leq \epsilon_2M'R+\frac{1}{2}L\epsilon_2^2R^2.
	\end{eqnarray}

   This implies both
	\begin{eqnarray}
	|F(\beta)-F(\beta')| \text{ and } |\tilde F(\beta)-\tilde F(\beta')|\nonumber\\
	\leq	\epsilon_2M'R+\frac{1}{2}L\epsilon_2^2R^2. \label{for-sum-1}
	\end{eqnarray}

Using triangle inequality, we obtain
\begin{eqnarray}
&&|\tilde F(\beta)-F(\beta)| \nonumber\\
&\leq&|\tilde F(\beta)-\tilde F(\beta')|+|\tilde F(\beta')-F(\beta')|\nonumber\\
&&+|F(\beta')-F(\beta)|\nonumber\\
&\leq& \frac{3}{2}\epsilon_1 F(\beta_{\mathtt{anc}})+2\times (\epsilon_2M'R+\frac{1}{2}L\epsilon_2^2R^2),
\end{eqnarray}
where the last inequality follows from Lemma~\ref{lem-union} (note $\beta'\in G$) and (\ref{for-sum-1}).  By letting $\epsilon_1=\frac{2m\epsilon}{7F(\beta_{\mathtt{anc}})}$ and $\epsilon_2=\frac{2\epsilon_1F(\beta_{\mathtt{anc}})}{R\left(\sqrt{M'^2+2L\epsilon_1F(\beta_{\mathtt{anc}})}+M'\right)}$, we have $|\tilde F(\beta)-F(\beta)|\leq \epsilon F(\beta)$ via simple calculations. That is, the returned vector $W=[w_1, w_2, \cdots, w_n]$ is a qualified coreset  $\mathcal{CS}_\epsilon(\beta_{\mathtt{anc}}, R)$.

Last, it remains to specify the obtained coreset size. To guarantee the success probability to be  at least  $1-1/n$, we set $\lambda=1/n$. Then we can compute the coreset size, {\em i.e.,} the number of non-zero entries of $W$, 
which equals 
\begin{eqnarray}
\sum^N_{j=0}|Q_j|=\tilde{O}\left(\left(\frac{H+MR+LR^2}{m}\right)^2\cdot\frac{d}{\epsilon^2}\right) \label{for-ana-coresetsize}
\end{eqnarray}
 (by combining~\eqref{eq:coreset-size}, with the selection of $\delta$ in Lemma~\ref{lemma2}, the choice of $\lambda$ in Lemma~\ref{lem-union} along with~\eqref{eq:ball}, and the definition of $\epsilon_1$).  

For the case that $\beta$ is restricted to have at most $k$ non-zero entries ({\em i.e.,} sparse optimization) 
, we revisit the size $|G|$ in (\ref{for-grid}). For a $d$-dimensional vector, there are ${d\choose k}$ different combinations for the positions of the $k$ non-zero entries. Thus $\beta$ can be only located in the union of ${d\choose k}$ $k$-dimensional subspaces (similar idea was also used for analyzing compressed sensing~\cite{baraniuk2006johnson}). In other words, we just need to build the grid  (only for the sake of analysis) in the union of ${d\choose k}$ $k$-dimensional balls instead of the whole $\mathbb{B}(\beta_{\mathtt{anc}},R)$. Consequently, the new size $|G|$ is $\left({d\choose k}\cdot \left(O\left(\frac{2\sqrt{\pi e}}{\epsilon_2}\right)\right)^k\right)$, and the coreset size is reduced to 
$\tilde{O}\left(\left(\frac{H+MR+LR^2}{m}\right)^2\cdot\frac{k\log d}{\epsilon^2}\right)$.

\subsection{Gradient Preservation}
\label{sec-grad}
Besides the quality guarantee (\ref{for-loccoreset2}), our local coreset also enjoys another favorable property. In this section, we show that the gradient $\nabla\tilde F(\beta)$ can be approximately preserved as well, {\em i.e.,}    $\nabla\tilde F(\beta)\approx \nabla F(\beta)$ for any $\beta\in \mathbb{B}(\beta_{\mathtt{anc}},R)$. Because the trajectory of $\beta$ is guided by the gradients, this property gives a hint that our eventually obtained $\beta$ is likely to be close to the optimal hypothesis $\beta^*$ (we also validate this property in our experiments). In some scenarios like statistical inference and parameter estimation, we expect to achieve not only an  almost minimal loss $F(\beta)$, but also a small difference between  $\beta$ and $\beta^*$.

Given a vector $v\in\mathbb{R}^d$, we use $v_{[l]}$ to denote its $l$-th coordinate value, for $l=1, 2, \hdots, d$. Under Assumption \ref{assump-lip}, we obtain (similar with (\ref{eqa: betabound})), for any $1\leq i\leq n$, 
\begin{eqnarray}
\nabla f_i(\beta)_{[l]}\in \nabla f_i(\beta_{\mathtt{anc}})_{[l]}\pm LR.
\end{eqnarray} 
We can apply a similar line of analysis as in Section~\ref{sec-ana} to obtain Theorem~\ref{gradient_coreset}. We need the following modifications.   
First, we need to change the sample size $Q_j$ (and similarly the total coreset size in (\ref{for-ana-coresetsize})) of Algorithm~\ref{alg: local coreset} because we now consider a different objective. Also, we achieve an additive error for the gradient, instead of the $(1\pm \epsilon)$-multiplicative error as (\ref{for-loccoreset2}). The reason is that the gradient can be almost equal to $0$, if the solution approaches to a local or global optimum (but the objective value~(\ref{for-er}) is usually not equal to $0$, {\em e.g.,} we often add a non-zero penalty item to the objective function).

	\begin{theorem}
	\label{gradient_coreset}
Let $\sigma>0$ be any given small number.  With probability $1-\frac{1}{n}$, Algorithm \ref{alg: local coreset} can return a vector $W$ with $\tilde{O}\left(\frac{(LR+M)^2}{\sigma^2}\cdot d\right)$ non-zero entries, such that for any $ \beta \in \mathbb{B}(\beta_{\mathtt{anc}},R)$ and $1\leq l\leq d$, 
\begin{eqnarray}
\nabla \tilde F(\beta)_{[l]}\in \nabla F(\beta)_{[l]}\pm \sigma. 
\end{eqnarray}
 Furthermore, if the vector $\beta$ is restricted to have at most $k\in \mathbb{Z}^+$ non-zero entries in the hypothesis space $\mathbb{R}^d$, the number of non-zero entries of $W$ can be reduced to be $\tilde{O}\left(\frac{(LR+M)^2}{\sigma^2}\cdot k\log d \right)$.
	\end{theorem}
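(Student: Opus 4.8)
The plan is to replay the argument of Section~\ref{sec-ana} essentially verbatim, but with the scalar loss $f_i(\beta)$ replaced, one coordinate at a time, by the gradient component $\nabla f_i(\beta)_{[l]}$. The single structural difference is that the additive target $\pm\sigma$ lets me drop the layer-dependent error budget $\delta=\epsilon_1 2^{j-1}H$, and with it Claim~\ref{claim_1}, entirely. Indeed, the preliminary estimate $\nabla f_i(\beta)_{[l]}\in\nabla f_i(\beta_{\mathtt{anc}})_{[l]}\pm LR$ stated above, together with $|\nabla f_i(\beta_{\mathtt{anc}})_{[l]}|\le\|\nabla f_i(\beta_{\mathtt{anc}})\|\le M$, shows that \emph{every} random variable $\nabla f_i(\beta)_{[l]}$ lies in the fixed interval $[-(M+LR),\,M+LR]$, independent of the layer index. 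A uniform Hoeffding range therefore suffices, and the heterogeneous factor $(H+MR+LR^2)^2/m^2$ collapses to $(M+LR)^2/\sigma^2$.

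First I would fix a coordinate $l$ and a point $\beta\in\mathbb{B}(\beta_{\mathtt{anc}},R)$. In place of Lemma~\ref{lemma_hoeffding} I would apply Hoeffding's inequality to the variables $\nabla f_i(\beta)_{[l]}$ within each layer $P_j$ of Algorithm~\ref{alg: local coreset}, now with the \emph{uniform} sample size $|Q_j|=O\!\left((M+LR)^2(\sigma')^{-2}\log\tfrac1\lambda\right)$, obtaining that the layer sample mean approximates the layer true mean within $\sigma'$ with probability $1-\lambda$. Multiplying by $|P_j|$, summing over $0\le j\le N$, dividing by $n$, and using the identity $\sum_j|P_j|=n$ (which here plays the role of Claim~\ref{claim_1}) yields the analogue of Lemma~\ref{lemma2}, namely $|\nabla\tilde F(\beta)_{[l]}-\nabla F(\beta)_{[l]}|\le\sigma'$ for this fixed $\beta$ and $l$.

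Next I would discretize exactly as in the passage following Lemma~\ref{lemma2}: build the grid $G$ of side $\epsilon_2 R/\sqrt d$ inside $\mathbb{B}(\beta_{\mathtt{anc}},R)$ with $|G|=(O(2\sqrt{\pi e}/\epsilon_2))^d$ as in~\eqref{eq:ball}, and take a union bound over the $N+1$ layers, the $|G|$ representatives, and additionally the $d$ coordinates, i.e.\ replace $\lambda$ by $\lambda/\big(d(N+1)|G|\big)$ as in Lemma~\ref{lem-union}. For $\beta'\in G$ with $\|\beta-\beta'\|\le\epsilon_2 R$, Assumption~\ref{assump-lip} gives $|\nabla f_i(\beta)_{[l]}-\nabla f_i(\beta')_{[l]}|\le L\epsilon_2 R$, hence both $|\nabla F(\beta)_{[l]}-\nabla F(\beta')_{[l]}|$ and $|\nabla\tilde F(\beta)_{[l]}-\nabla\tilde F(\beta')_{[l]}|$ are at most $L\epsilon_2 R$; a triangle inequality then bounds $|\nabla\tilde F(\beta)_{[l]}-\nabla F(\beta)_{[l]}|$ by $\sigma'+2L\epsilon_2 R$ for every $\beta$ and every $l$, and choosing $\sigma'=\sigma/2$ together with $\epsilon_2=\Theta(\sigma/(LR))$ makes this at most $\sigma$. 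Setting $\lambda=1/n$ and summing $\sum_{j=0}^N|Q_j|$ gives the size: the $\log\big(d(N+1)|G|/\lambda\big)$ factor contributes $O(d)$ through $\log|G|=O(d\log(1/\epsilon_2))$, producing $\tilde O\!\left((M+LR)^2\sigma^{-2}d\right)$; in the $k$-sparse case $|G|$ is replaced by $\binom{d}{k}(O(1/\epsilon_2))^k$, so $\log|G|=\tilde O(k\log d)$ and the size becomes $\tilde O\!\left((M+LR)^2\sigma^{-2}k\log d\right)$.

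Each step is individually routine once the uniform interval bound is in hand, so the only real work is bookkeeping: I must split the error budget consistently between the sampling term $\sigma'$ and the discretization term $2L\epsilon_2 R$, and I must verify that the extra union bound over the $d$ coordinates enters the sample size only logarithmically, so that the leading dependence on $d$ comes from $\log|G|$ rather than from the coordinate count. The most delicate point to state correctly is \emph{why} the additive guarantee replaces the multiplicative one of~\eqref{for-loccoreset2}: near a stationary point $\nabla F(\beta)$ can vanish while $F(\beta)$ stays bounded away from $0$, so a relative error on the gradient would be meaningless, which is precisely what forces the $\pm\sigma$ formulation and the uniform sampling behind it.
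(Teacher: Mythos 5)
Your proposal is correct and follows essentially the same route as the paper's own proof: fix $\beta$ and $l$, apply Hoeffding per layer with the uniform range $2(M+LR)$ and a layer-independent sample size, sum over layers via $\sum_j|P_j|=n$, then discretize $\mathbb{B}(\beta_{\mathtt{anc}},R)$ and union-bound over the grid, the $N+1$ layers, and the $d$ coordinates, finishing with the Lipschitz bound and a triangle inequality. Your explicit observation that the additive target makes Claim~\ref{claim_1} and the layer-dependent budget unnecessary, and your slightly more careful split $\sigma'=\sigma/2$ with $\epsilon_2=\Theta(\sigma/(LR))$, are consistent with (indeed marginally cleaner than) what the paper does.
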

	\begin{remark}
	If we want to guarantee both  Theorem~\ref{local_coreset} and \ref{gradient_coreset}, we can just set the coreset size as the maximum over both cases.
	\end{remark}
\subsection{Beyond Gradient Descent}
\label{sec-beyond}
In Section~\ref{sec-ana}, our analysis relied on the fact that the function $f(\beta, x_i, y_i)$ is differentiable. However, for some ERM problems, the loss function can be non-differentiable. A representative example is the $l_1$-norm regularized regression, such as \cite{10.2307/2346178,DBLP:conf/aaai/LeeLAN06}. We consider the $l_p$ regularized regression with $0< p \leq 2$. Given a regularization parameter $\lambda>0$, the objective function can be written as 
\begin{eqnarray}
\label{eq:reg}
F(\beta)=\frac{1}{n}\sum^n_{i=1}g(\beta, x_i, y_i)+\lambda \|\beta\|_p, \label{for-er-beyond}
\end{eqnarray}
where the function $g(\beta, x_i, y_i)$ is assumed to be differentiable and satisfy Assumption~\ref{assump-lip}. We can easily cast (\ref{eq:reg}) to have the form of (\ref{for-er}) by setting $f(\beta, x_i, y_i)=g(\beta, x_i, y_i)+\lambda\|\beta\|_p$. Noting that a local $\epsilon$-coreset of the original problem obviously is also a local $\epsilon$-coreset of (\ref{eq:reg}), a coreset algorithm querying the values of all $g_i$s is capable to construct the coreset of (\ref{eq:reg}). Actually an $\epsilon$-coreset of (\ref{eq:reg}) can be constructed with only access to the values of all $f_i$s as well (See details in Appendix \ref{sec:beyond-app}).

\section{Sequential Coreset Framework and Applications}
\label{sec-app}
The local $\epsilon$-coreset constructed in Section~\ref{sec-construction} can be directly used for compressing input data. However, the trajectory of the hypothesis $\beta$ (although enjoying the locality property) may span a relatively large range globally in the space. As discussed in Remark~\ref{rem-the-quality}, the coreset size depends on the pre-specified local region range. Therefore, the coreset size can be high, if we want to build in one shot a local coreset  that covers the whole trajectory. 
This  motivates us to propose the \textbf{sequential coreset framework} (see Algorithm~\ref{alg: sequential algorithm}). 
  
In each round of Algorithm~\ref{alg: sequential algorithm}, we build the local coreset $\mathcal{CS}_\epsilon(\beta_t, R)$ and run the ``host'' algorithm $\mathcal{A}$ on it until either (\rmnum{1}) the result becomes stable inside $\mathbb{B}(\beta_t, R)$ or (\rmnum{2}) the hypothesis $\beta$ reaches the boundary of $\mathbb{B}(\beta_t, R)$\footnote{In practice, we can set a small number $\sigma\in (0,1)$ and deduce that the boundary is reached when $\|\beta_t-\beta\|>(1-\sigma)R$.}.  
For (\rmnum{1}), we just terminate the algorithm and output the result; for (\rmnum{2}), we update $\beta_t$ and proceed the next iteration.  
\begin{algorithm}[tb]
	\caption{Sequential Coreset Framework}
	\label{alg: sequential algorithm}
	\begin{algorithmic}
		\STATE {\bfseries Input:} An instance $P=\{(x_1, y_1),$ $ (x_2, y_2), \cdots, $ $(x_n, y_n)\}$ of the ERM problem (\ref{for-er}) with the initial solution $\beta_0$ and range $R>0$, an available gradient descent algorithm $\mathcal{A}$ as the ``host'', and the parameter $\epsilon\in(0,1)$. 
 		\begin{enumerate}
		\item For $t=0,1, \hdots$, build the local coreset $\mathcal{CS}_\epsilon(\beta_t, R)$ and run the host algorithm $\mathcal{A}$ on it until:
		\begin{enumerate}
		\item if the result becomes stable inside $\mathbb{B}(\beta_t, R)$, terminate the loop and return the current $\beta$;
		\item else, the current $\beta$ reaches the  boundary of $\mathbb{B}(\beta_t, R)$, and then set $\beta_{t+1}=\beta$ and $t=t+1$. 
		\end{enumerate}
		\end{enumerate}

	\end{algorithmic}
\end{algorithm}
Following the sequential coreset framework, we consider its applications for several ERM problems in machine learning.

\textbf{Ridge regression.} In the original linear regression problem, the data space $\mathbb{X}=\mathbb{R}^d$ and the response space $\mathbb{Y}=\mathbb{R}$, and the goal is to find a vector $\beta\in \mathbb{R}^d$ such that the objective function $F(\beta)=\frac{1}{n}\sum^n_{i=1}|\langle x_i,\beta \rangle-y_i|^2$ is minimized. For Ridge regression~\cite{ridge}, we add a squared $l_2$-norm penalty and the objective function becomes 
\begin{eqnarray}
F(\beta)=\frac{1}{n}\sum^n_{i=1}|\langle x_i,\beta \rangle-y_i|^2+\lambda \|\beta\|^2_2, \label{for-ridge}
\end{eqnarray}
where $\lambda>0$ is a regularization parameter. Consequently, the loss function $f(\beta, x_i, y_i)$ of (\ref{for-ridge}) is taken as $|\langle x_i,\beta \rangle-y_i|^2+\lambda\|\beta\|^2_2$.

\textbf{Lasso regression.} Another popular regularized regression model is Lasso~\cite{10.2307/2346178}.  Compared to (\ref{for-ridge}), the only difference is that we use an $l_1$-norm penalty {\em i.e., }
\begin{eqnarray}
F(\beta)=\frac{1}{n}\sum^n_{i=1}|\langle x_i,\beta \rangle-y_i|^2+\lambda \|\beta\|_1, \label{for-lasso}
\end{eqnarray}
where $\lambda>0$ is a regularization parameter. The loss function $f(\beta, x_i, y_i)$ of (\ref{for-lasso}) is $|\langle x_i,\beta \rangle-y_i|^2+\lambda\|\beta\|_1$. A key advantage of Lasso is that the returned $\beta$ is a sparse vector. 
The objective function (\ref{for-lasso}) is not differentiable, but it can still be solved by our sequential coreset framework as discussed in Section~\ref{sec-beyond}. 

\textbf{Logistic regression.} For Logistic regression, the response is binary, {\em i.e.,} $y_i=0$ or $1$~\cite{CRAMER2004613}. The objective function 
\begin{eqnarray}
F(\beta)=-\frac{1}{n}\sum^n_{i=1}\Big\{y_i\log g(\langle x_i,\beta\rangle)+\nonumber\\
(1-y_i)\log \big(1-g(\langle x_i,\beta\rangle)\big)\Big\}, \label{for-log}
\end{eqnarray}
where $g(t):=\frac{1}{1+e^{-t}}$ (the logistic function). We may add an $l_1$ or $l_2$-norm penalty to (\ref{for-log}), in the same way as (\ref{for-ridge}) and (\ref{for-lasso}). The loss function $f(\beta, x_i, y_i)$ for Logistic regression is $-y_i\log g(\langle x_i,\beta\rangle)-(1-y_i)\log \big(1-g(\langle x_i,\beta\rangle)\big)$. 

\textbf{Gaussian Mixture Model (GMM).} As emphasized before, our local coreset method does not require the objective function to be convex. Here,  we consider a typical non-convex example: GMM training~\cite{bishop2006pattern}. A mixture of $k$ Gaussian kernels is represented with $\beta \coloneqq [(\omega_1, \mu_1, \Sigma_1), \hdots, (\omega_k, \mu_k, \Sigma_k)]$, where $\omega_1, \hdots, \omega_k\geq 0$, $\sum^k_{j=1}\omega_j=1$, and each $(\mu_j, \Sigma_j)$ is the mean and covariance matrix of the $j$-th Gaussian in $\mathbb{R}^D$. GMM is an unsupervised learning problem, where the training dataset contains $\{x_1, \cdots, x_n\}\subset\mathbb{R}^D$, and the goal is to minimize the objective function
\begin{eqnarray}
F(\beta)=-\frac{1}{n}\sum^n_{i=1}\log\Big(\sum^k_{j=1}\omega_j N(x_i, \mu_j, \Sigma_j)\Big), \label{for-gmm}
\end{eqnarray}
where $N(x_i, \mu_j, \Sigma_j)=\frac{1}{\sqrt{(2\pi)^D|\Sigma_j|}}\exp(-\frac{1}{2}(x_i-\mu_j)^T\Sigma^{-1}_j(x_i-\mu_j))$; so $f(x_i, \beta)=-\log\Big(\sum^k_{j=1}\omega_j N(x_i, \mu_j, \Sigma_j)\Big)$ for (\ref{for-gmm}). It is worth noting that (\ref{for-gmm}) is differentiable and Lipschitz smooth and thus can be solved via the gradient descent method. However, the expectation-maximization (EM) method is more popular due to its simplicity and efficiency for GMM training.
Moreover, the EM method also has the locality property in practice.

In our experiment, we still use Algorithm~\ref{alg: sequential algorithm} to generate the sequential coreset, but run the EM algorithm as the ``host'' algorithm $\mathcal{A}$. 

\newcounter{sd3}
\begin{figure*}[htbp]
	\begin{center}
		\centerline
		{\includegraphics[width=0.5\columnwidth]{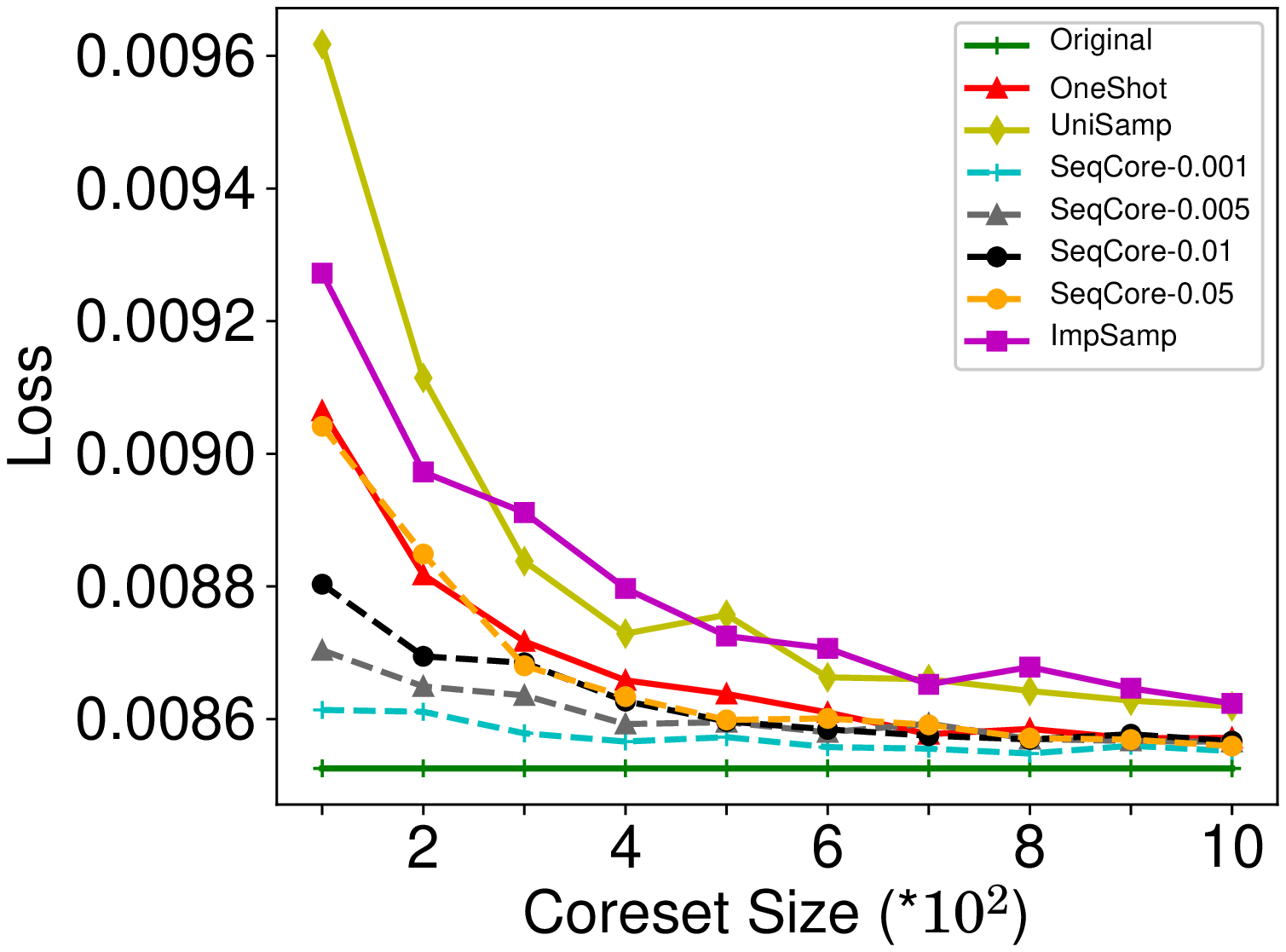} 
		\hspace{0.1in}
			\includegraphics[width=0.5\columnwidth]{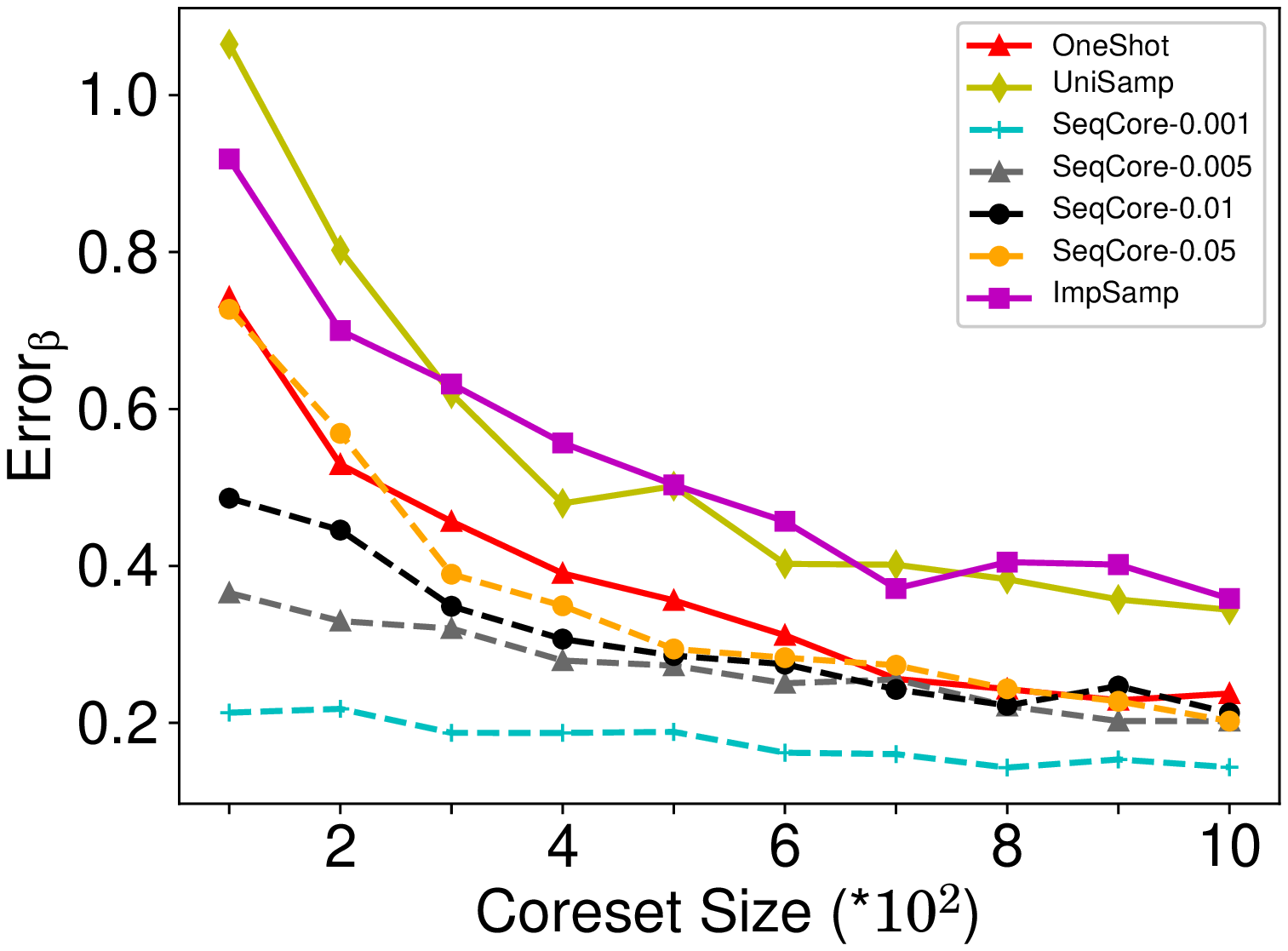}
					\hspace{0.1in}
			\includegraphics[width=0.5\columnwidth]{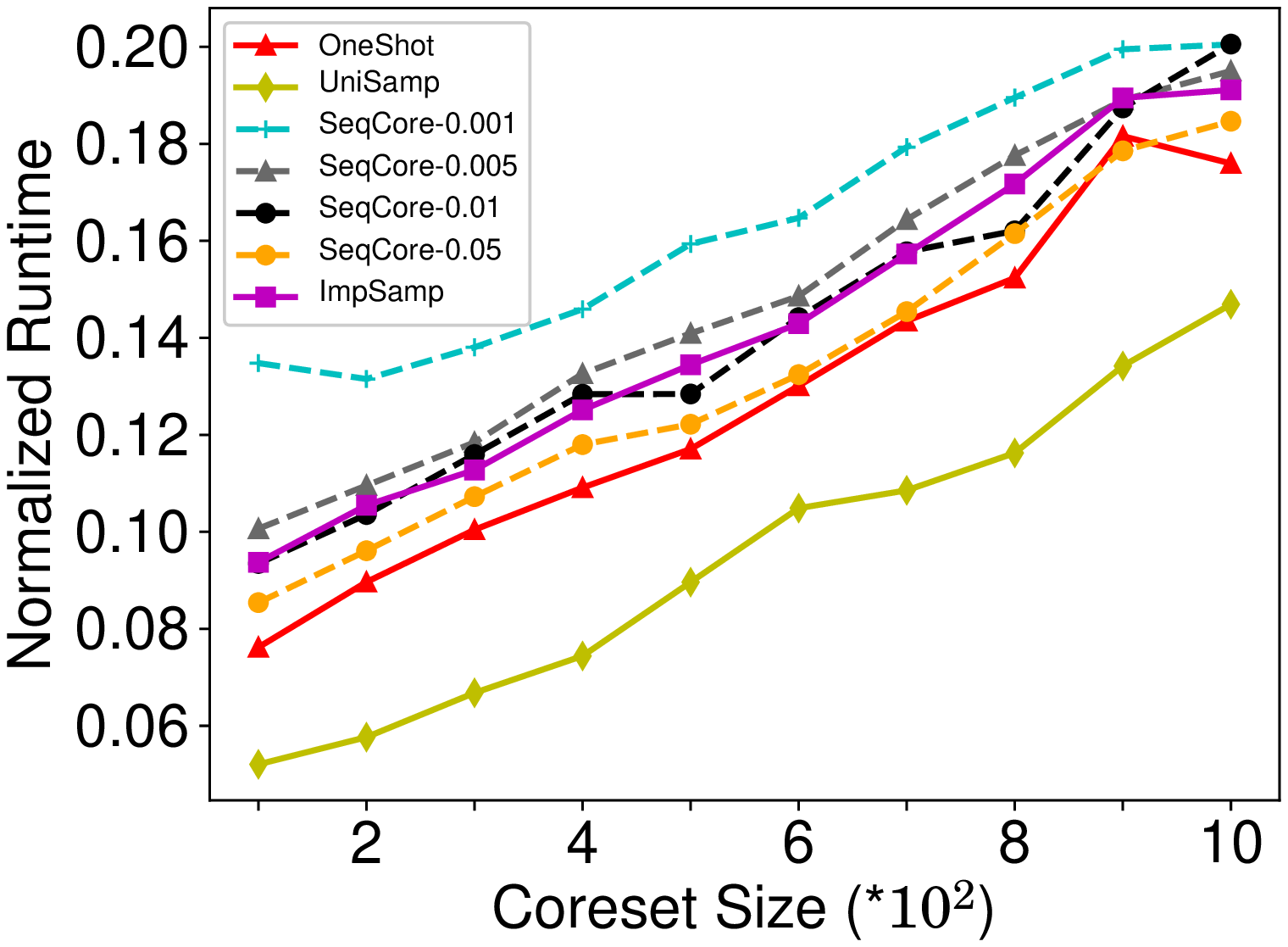}}
	\end{center}
	\vspace{-0.36in}
	\caption{The experimental results on \textsc{Appliances Energy} for Ridge regression ($\lambda=0.01$). } 
	\label{fig:Ridge_Appenergy}  
\end{figure*}

\begin{figure*}[htbp]
	\begin{center}
		\centerline
		{\includegraphics[width=0.5\columnwidth]{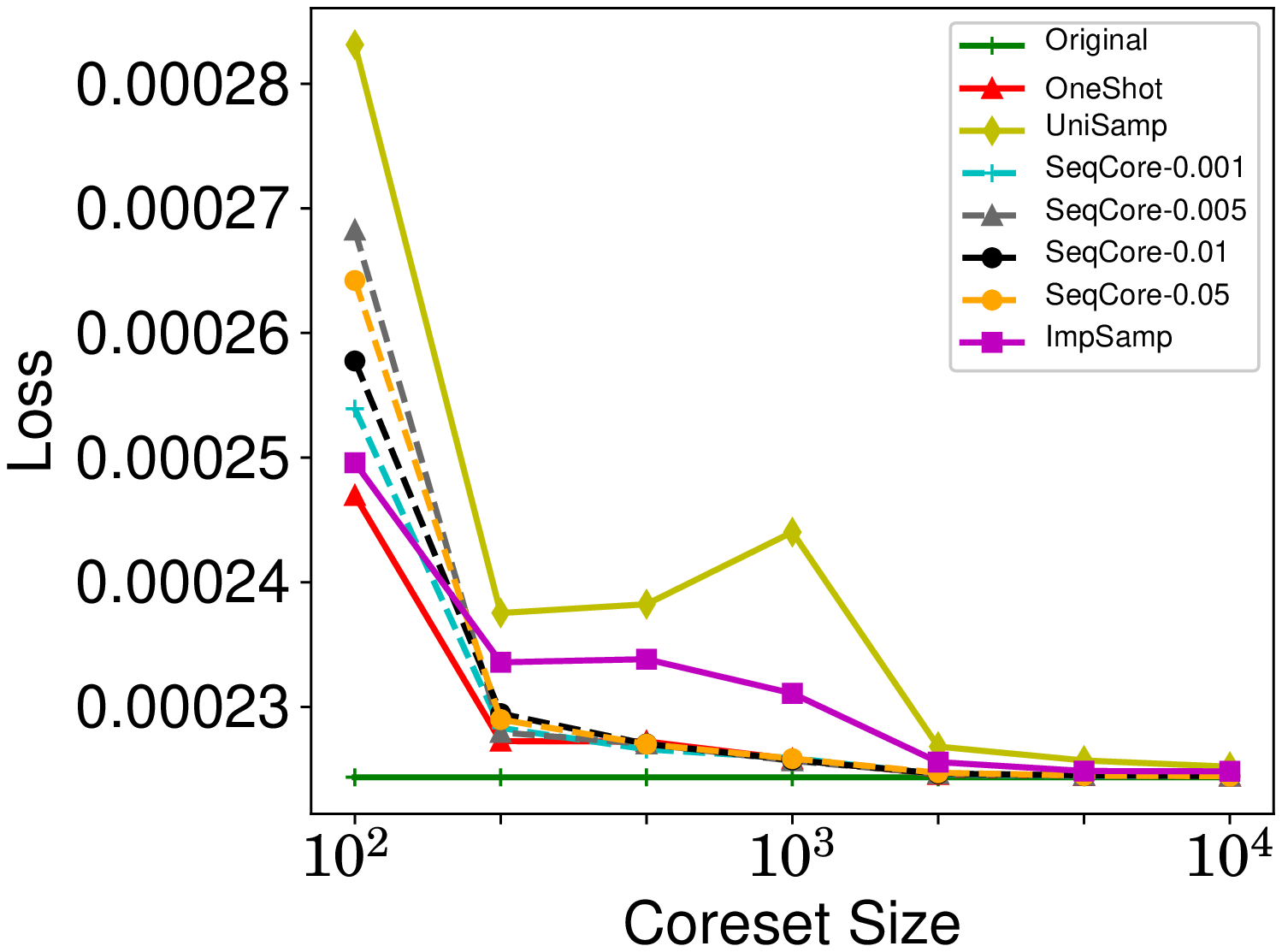} 
				\hspace{0.1in}
			\includegraphics[width=0.5\columnwidth]{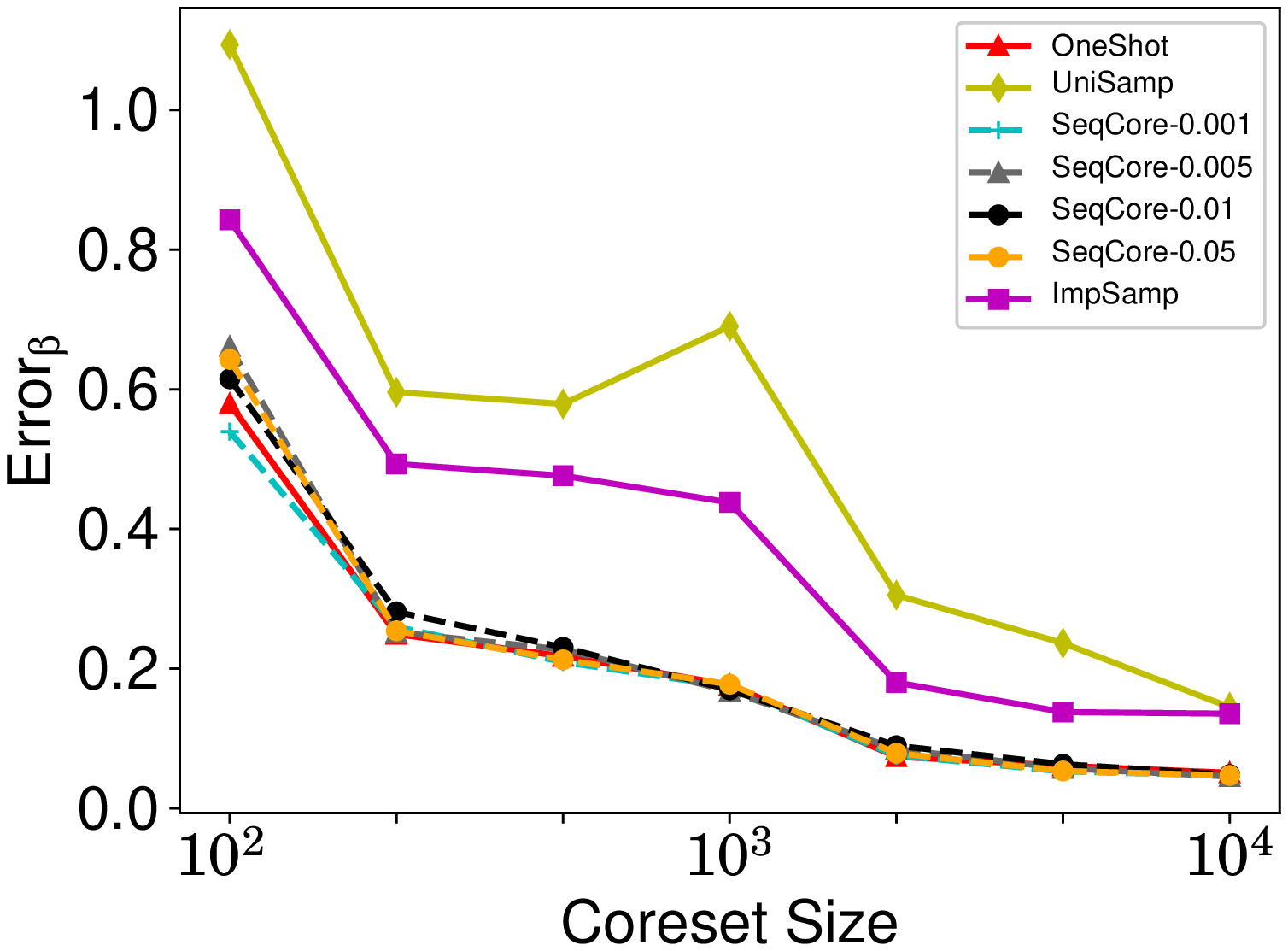}
					\hspace{0.1in}
			\includegraphics[width=0.5\columnwidth]{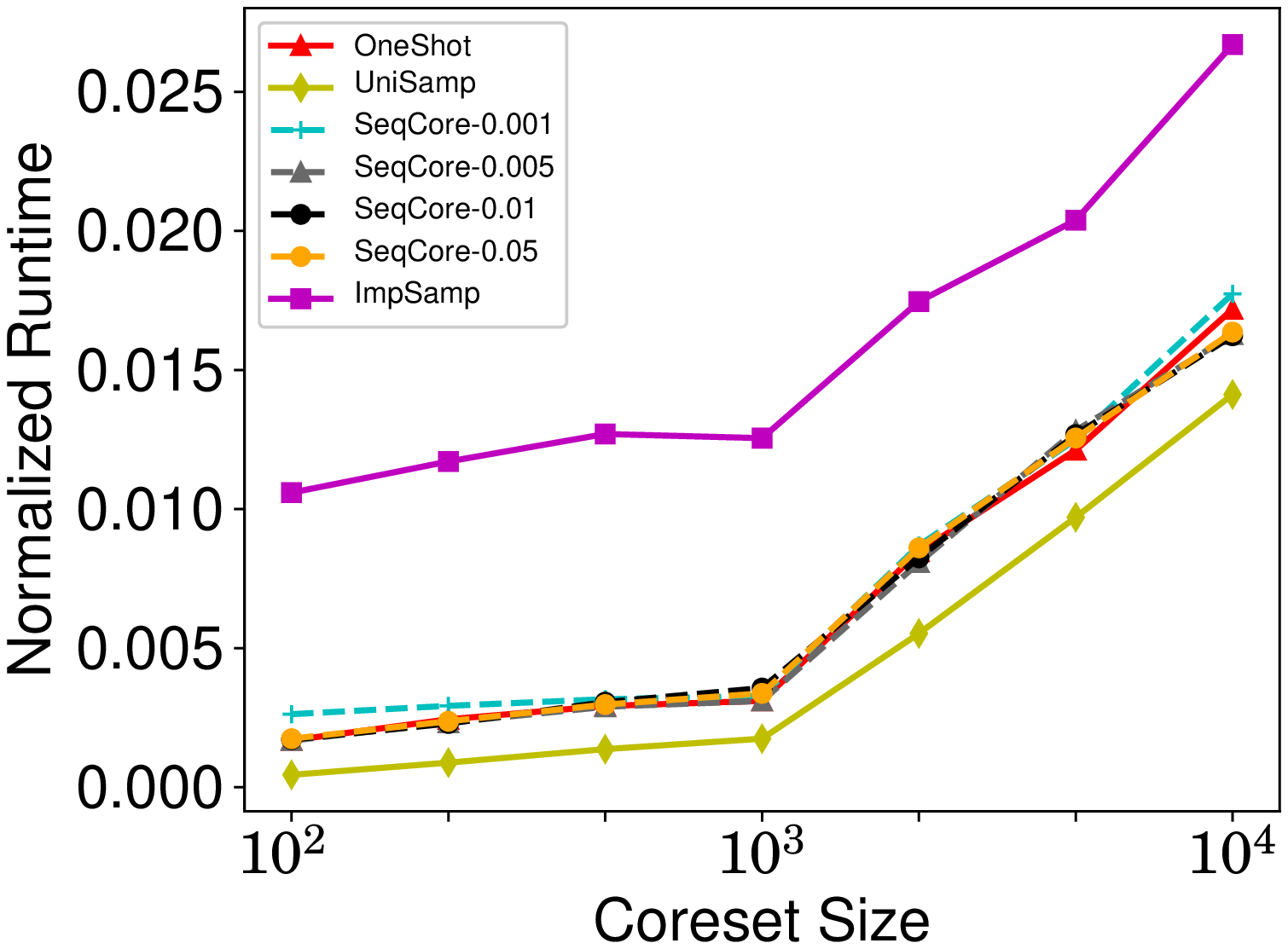}}
	\end{center}
	\vspace{-0.36in}
	\caption{The experimental results on \textsc{Facebook Comment} for Ridge regression ($\lambda=0.01$).} 
	\label{fig:Ridge_fb_comment}  
\end{figure*}

\begin{figure*}[htbp]
	\begin{center}
		\centerline
		{\includegraphics[width=0.5\columnwidth]{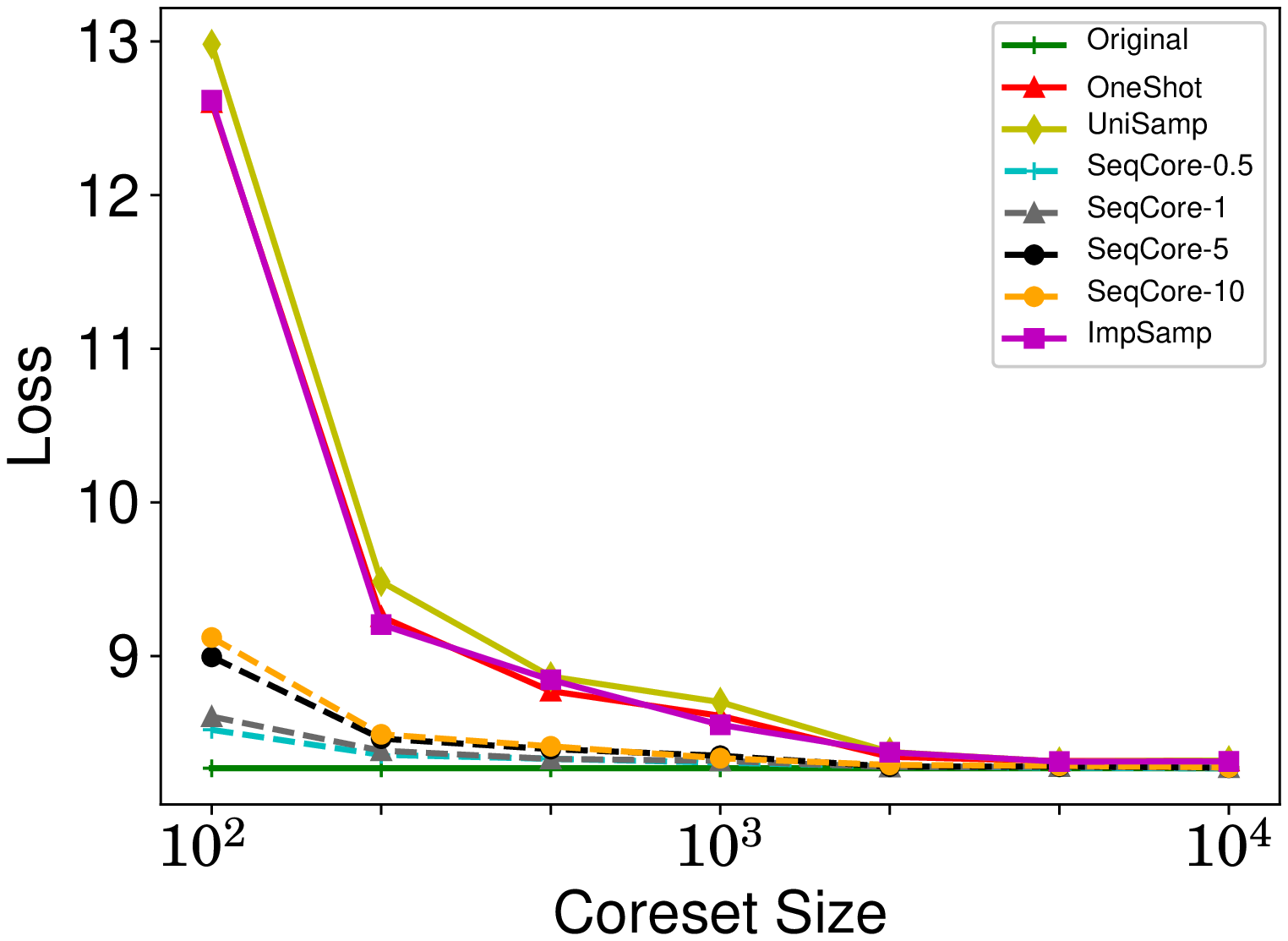} 
				\hspace{0.1in}
				\includegraphics[width=0.5\columnwidth]{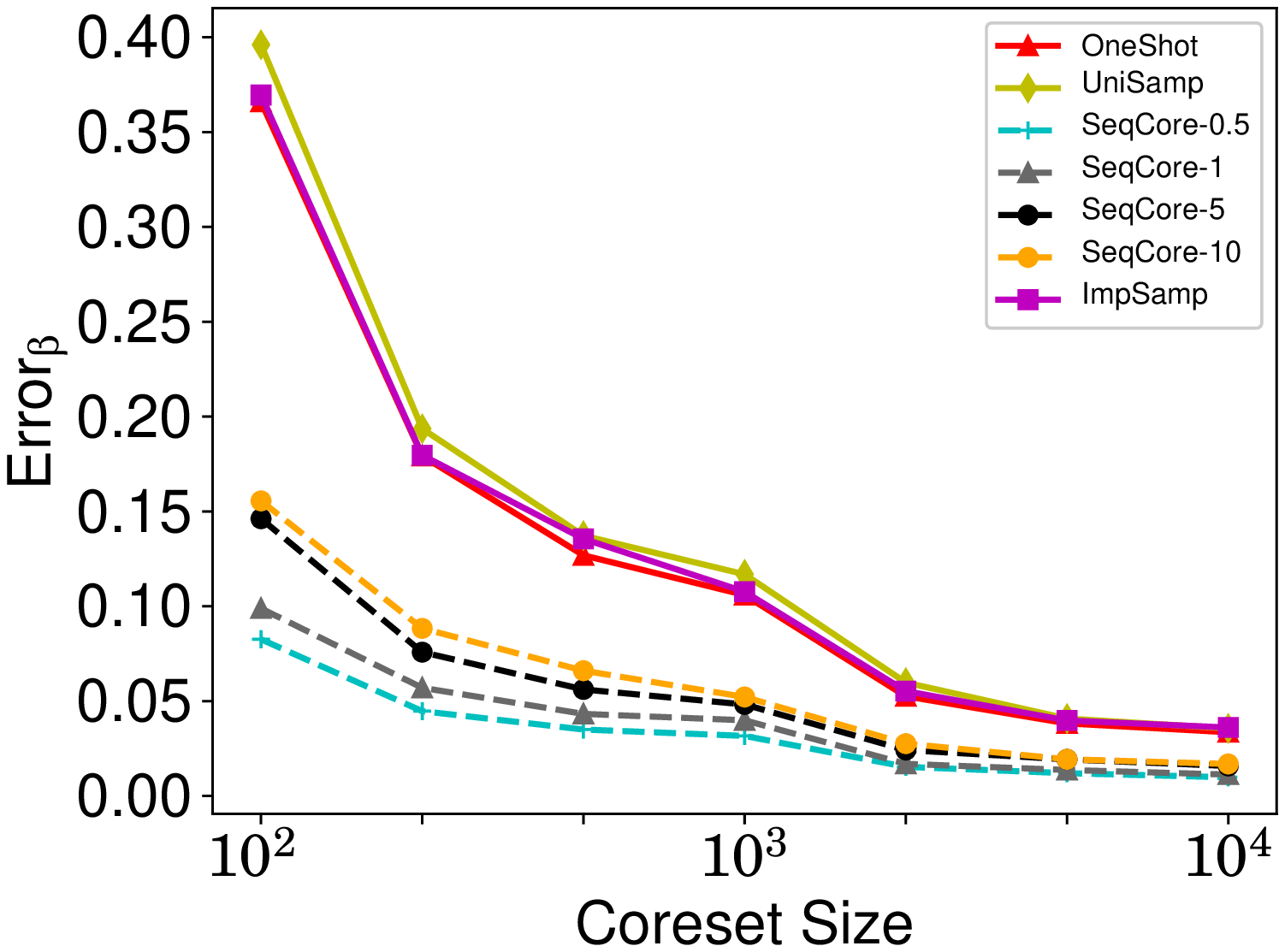}
						\hspace{0.1in}
				\includegraphics[width=0.5\columnwidth]{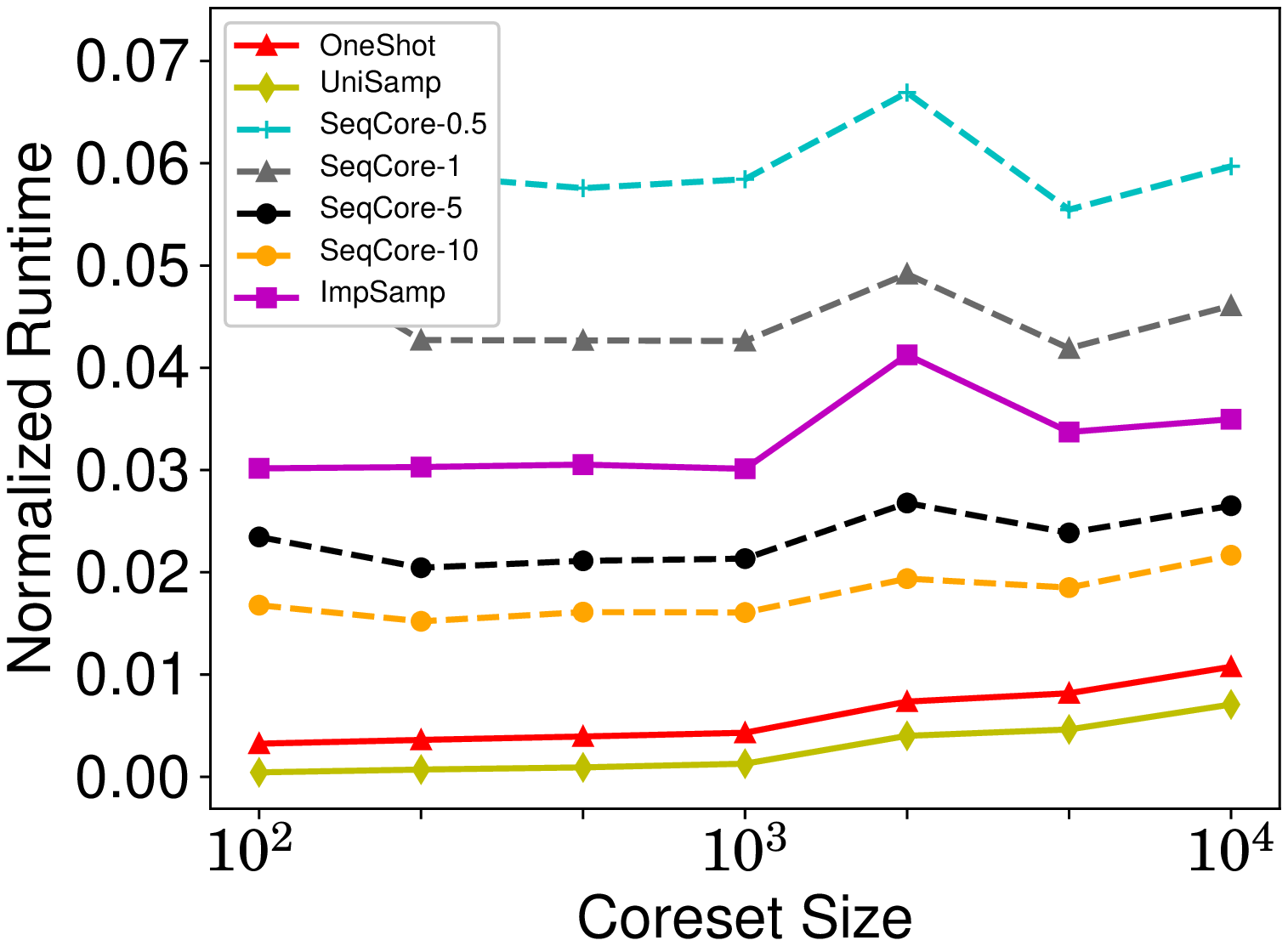}}
	\end{center}
	\vspace{-0.3in}
	\caption{The experimental results on the synthetic dataset for Ridge regression ($\lambda=0.01$)}
	\label{fig:Ridge_Syn}  
\end{figure*}

\begin{figure*}[htbp]
	\begin{center}
		\centerline
		{\includegraphics[width=0.5\columnwidth]{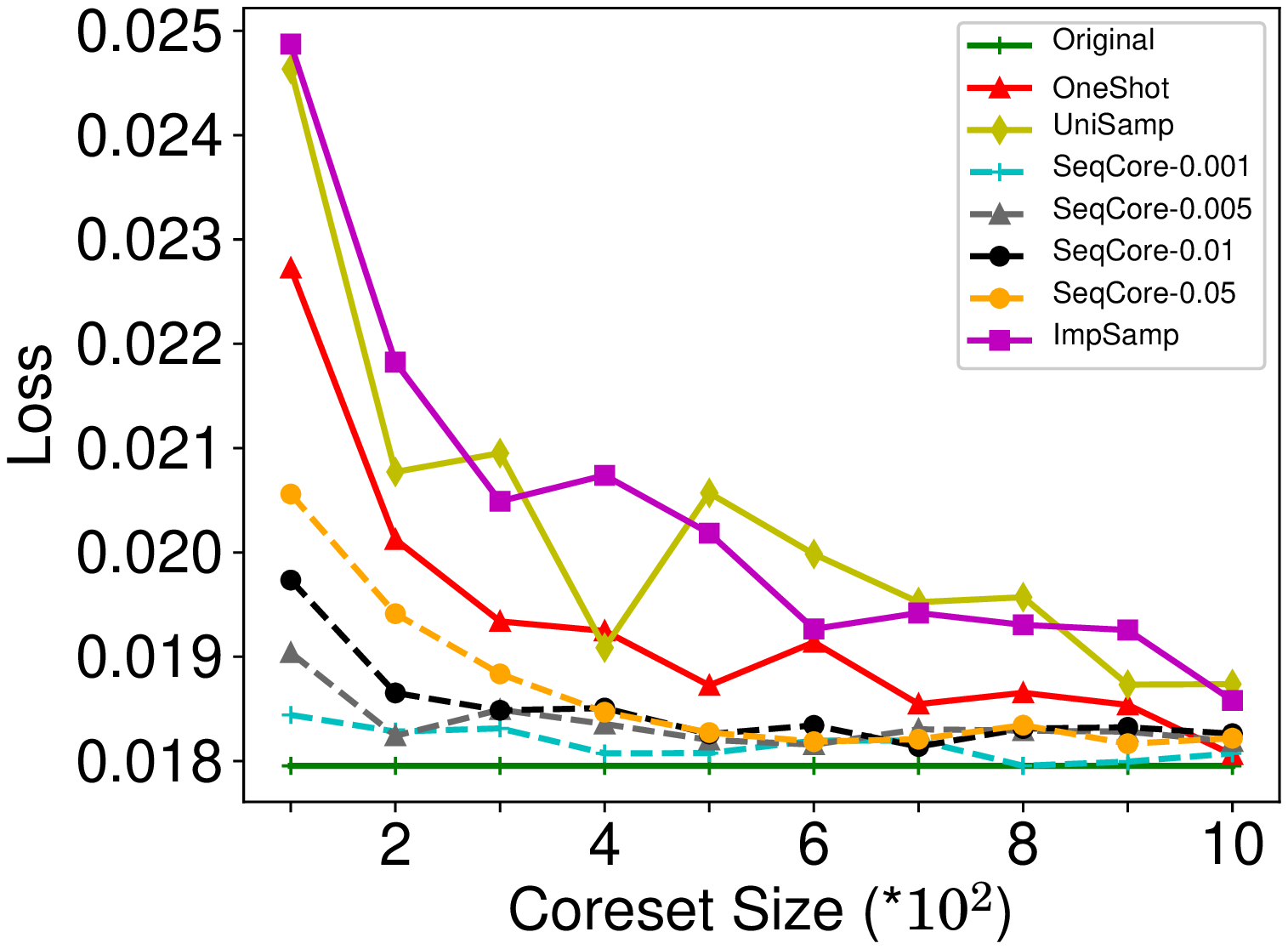}
				\hspace{0.1in}
			\includegraphics[width=0.5\columnwidth]{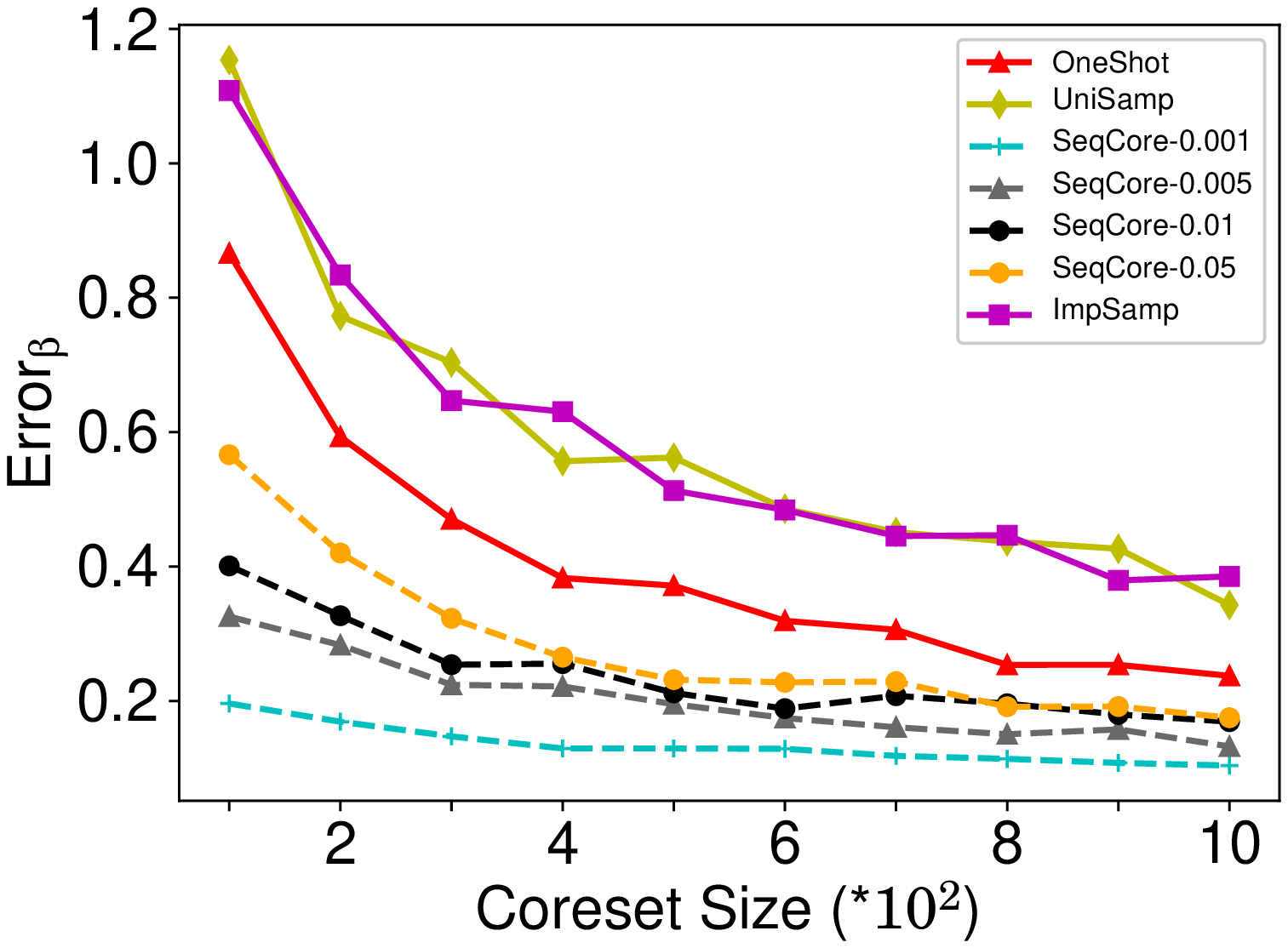}
					\hspace{0.1in}
			\includegraphics[width=0.5\columnwidth]{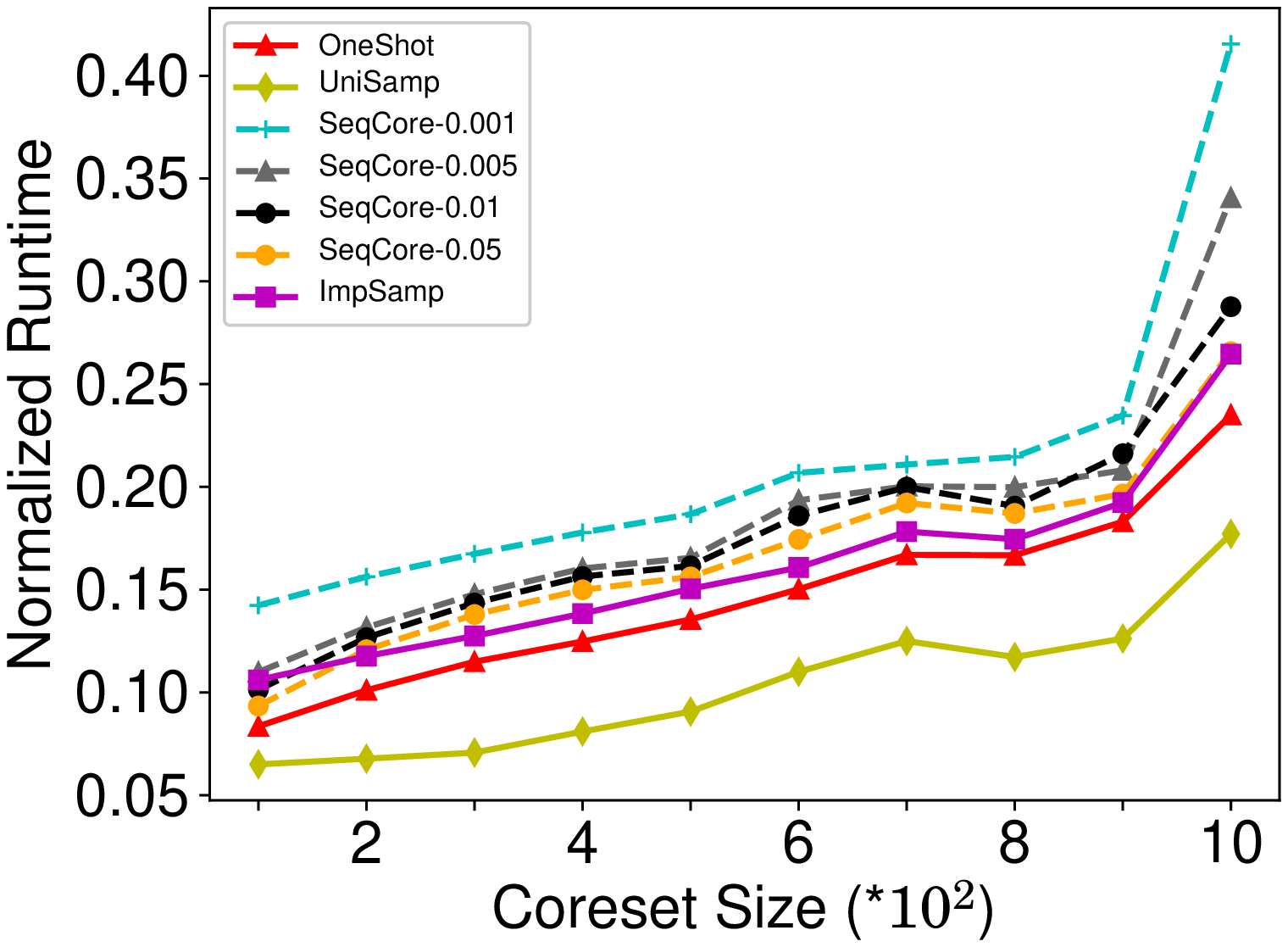}}
	\end{center}
	\vspace{-0.36in}
	\caption{The experimental results on \textsc{Appliances Energy} for Lasso regression ($\lambda=0.01$). } 
	\label{fig:Lasso_Appenergy}  
\end{figure*}

\begin{figure*}[htbp]
	\begin{center}
		\centerline
		{\includegraphics[width=0.5\columnwidth]{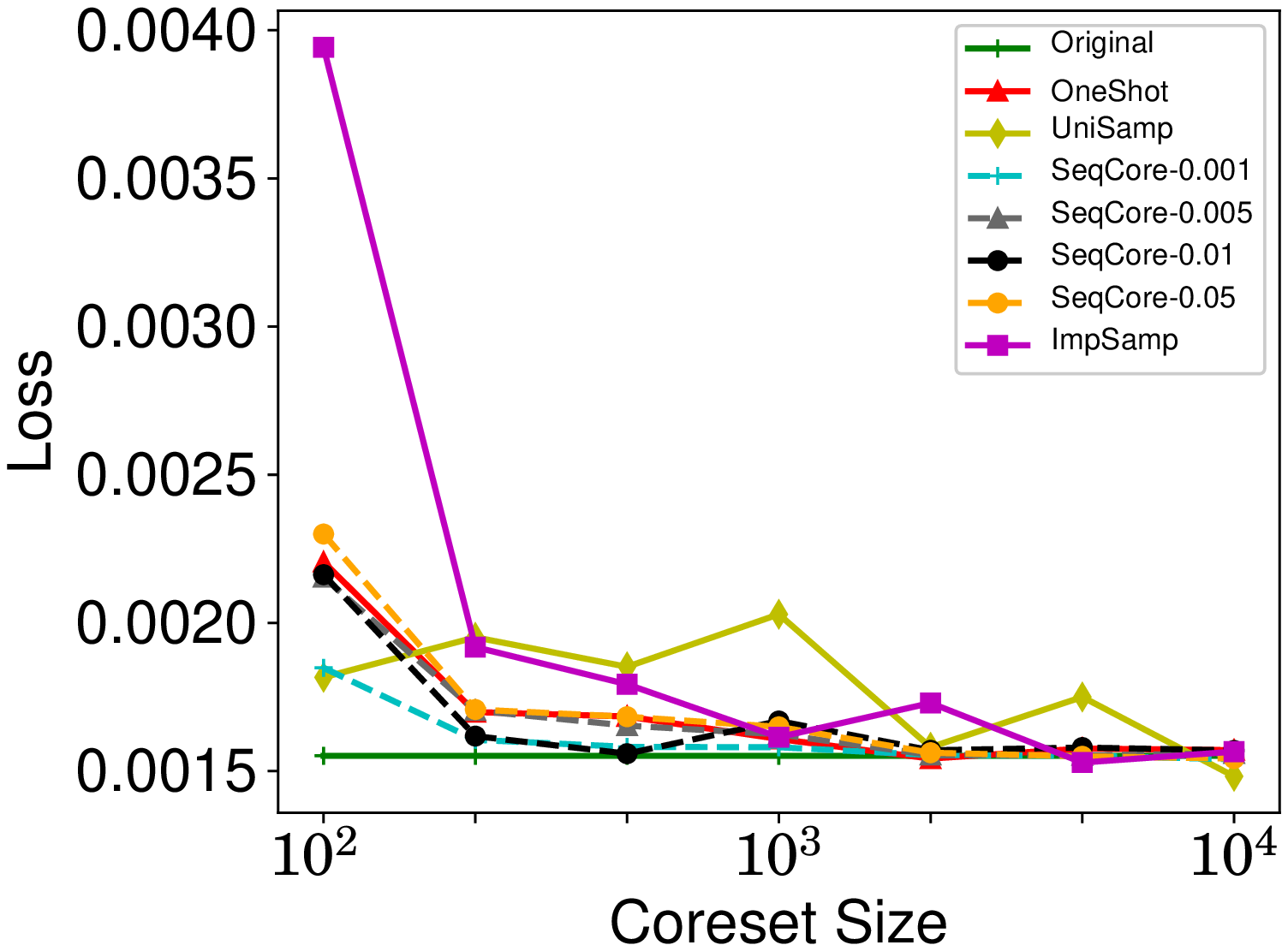} 
				\hspace{0.1in}
			\includegraphics[width=0.5\columnwidth]{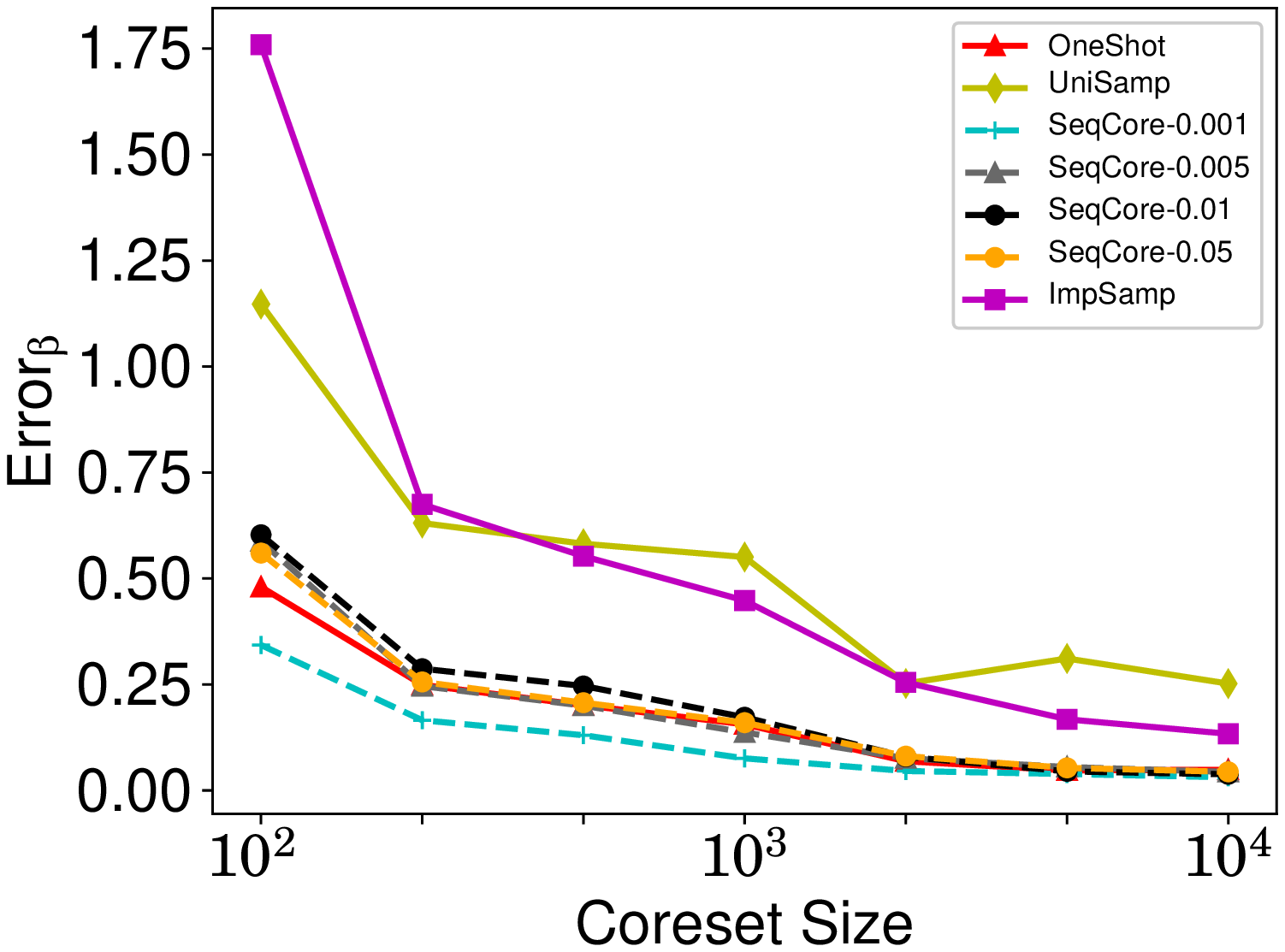}
					\hspace{0.1in}
			\includegraphics[width=0.5\columnwidth]{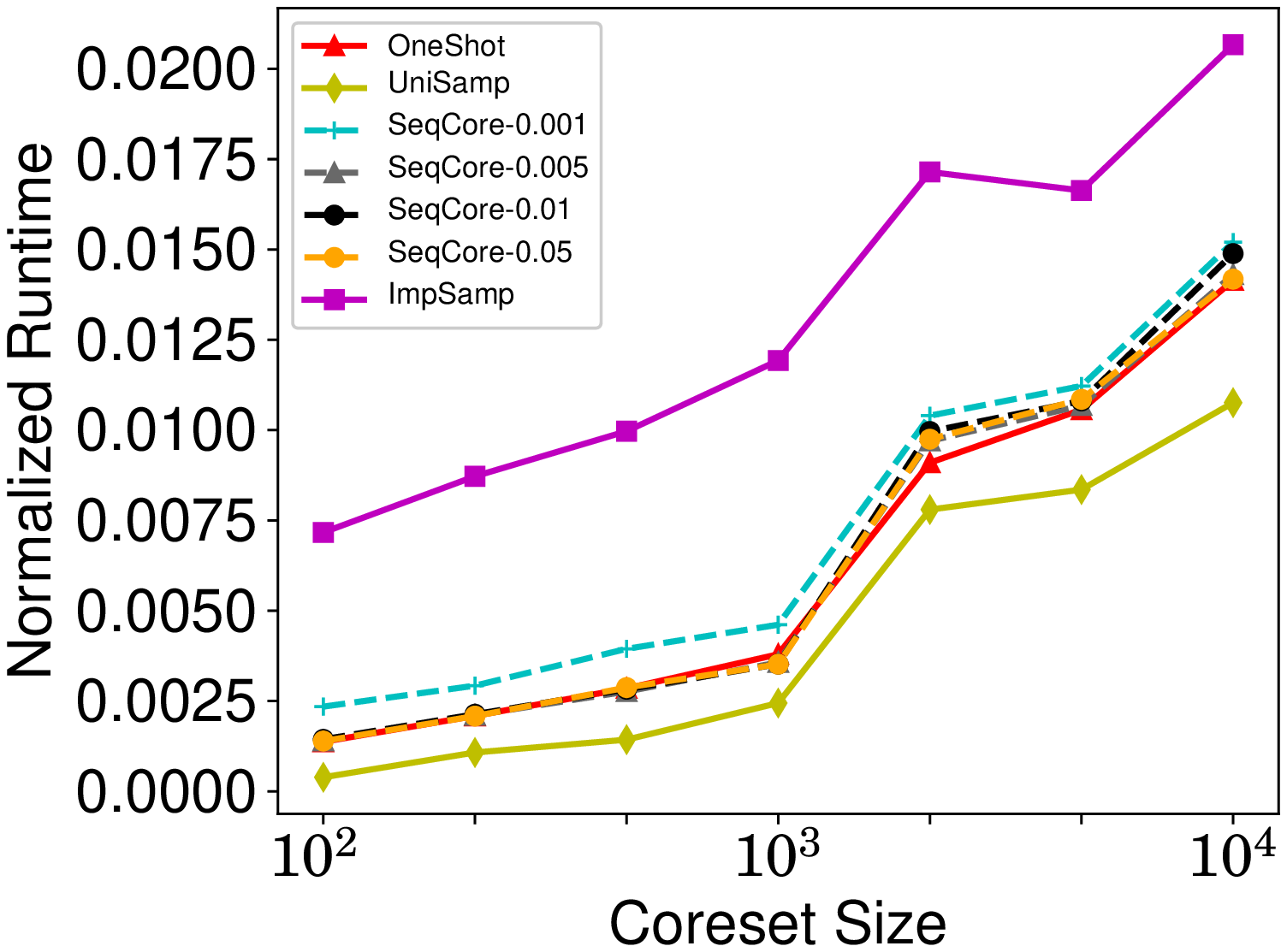}}
	\end{center}
	\vspace{-0.36in}
	\caption{The experimental results on \textsc{Facebook Comment} for Lasso regression ($\lambda=0.01$). } 
	\label{fig:Lasso_fb_comment}  
\end{figure*}

\section{Experimental Evaluation}
\label{sec-exp}
We evaluate the performance of our sequential coreset method for the applications mentioned in Section~\ref{sec-app}. All results were obtained on a server equipped with 2.4GHz Intel CPUs and 256GB main memory; the algorithms were implemented in Python.

\subsection{Ridge and Lasso Regression}
\label{sec-exp01}
We consider  Ridge and Lasso regression  first. 

\textbf{Datasets.} \textsc{Appliances Energy} is a dataset for predicting energy consumption which contains $19735$ points in $\mathbb{R}^{29}$~\cite{candanedo2017data}. \textsc{Facebook Comment} is a dataset for predicting comment which contains $602813$ points in $\mathbb{R}^{54}$~\cite{Sing1503:Comment}.
Furthermore, we generate a synthetic dataset of $10^6$ points in $\mathbb{R}^{50}$; each point is randomly sampled from the linear equation $y=\langle h, x\rangle$, where each coefficient of $h$ is sampled from $[-5, 5]$ uniformly at random; for each data point we also add a Gaussian noise $\mathcal{N}(0,4)$  to $y$.

\textbf{Compared methods.} As the host algorithm $\mathcal{A}$ in Algorithm~\ref{alg: sequential algorithm}, we apply the standard gradient descent algorithm. Fixing a coreset size, we consider several different data compression methods for comparison. (1) \textsc{Original}: directly run $\mathcal{A}$ on the original input data; (2) \textsc{UniSamp}: the simple uniform sampling; (3) \textsc{ImpSamp}: the importance sampling method~\cite{DBLP:conf/nips/TukanMF20}; 

(4) \textsc{SeqCore-$R$}: our sequential coreset method with a specified region range $R$;  

(5) \textsc{OneShot}: build the local coreset as Algorithm~\ref{alg: local coreset} in one-shot (without using the sequential idea)\footnote{For \textsc{OneShot}, we do not need to  specify the range $R$, if we fix the coreset size. The range is only used for our sequential coreset method because we need to re-build the coreset when $\beta$ reaches the boundary.}. 

\textbf{Results.}  
We consider three metrics to measure the performance: (1) the total loss, (2) the normalized error to the optimal $\beta^*$ (let $\mathtt{Error}_{\beta}=\frac{\Vert \beta-\beta^* \Vert_2}{\Vert\beta^*\Vert_2}$ where $\beta$ is the obtained solution and $\beta^*$ is the optimal solution obtained from \textsc{Original}), and (3) the normalized runtime (over the runtime of \textsc{Original}).  The results of Ridge regression are shown in Figures \ref{fig:Ridge_Appenergy}, \ref{fig:Ridge_fb_comment} and \ref{fig:Ridge_Syn} (averaged across $10$ trials). We can see that in general our proposed sequential coreset method has better performance on the loss and $\mathtt{Error}_{\beta}$, though sometimes it is slightly slower than \textsc{ImpSamp} if we set $R$ to be too small. \textsc{UniSamp} is always the fastest one (because it is just simple uniform sampling), but at the cost of inferior performance in total loss and model estimate error. \textsc{OneShot} is faster than \textsc{SeqCore-$R$} but often has worse loss and error. 
Similar results of Lasso regression are shown in Figure~\ref{fig:Lasso_Appenergy} and \ref{fig:Lasso_fb_comment}. 
Due to the space limit, more detailed experimental results (including the results on Logistic regression and GMM) are shown in the appendix.

\subsection{Gaussian Mixture Models}
\label{sec-exp02}
We directly generate the datasets by using the software package~\cite{scikit-learn} (the number of the data points $n = 10^5$). The host EM algorithm implementation is also from ~\cite{scikit-learn}. We separately vary the dimension, Gaussian Components number and coreset size. The experimental results are shown in Figure~\ref{fig:GMM_synth_time} and Figure~\ref{fig:GMM_synth_purity}. The \textit{purity} evaluates the similarity between our obtained clustering result and the ground truth~\cite{DBLP:books/daglib/0021593}. We can see our proposed sequential coreset method is slightly slower than \textsc{UniSamp} and \textsc{ImpSamp}~\cite{JMLR:v18:15-506}, but can achieve better purity.

\begin{figure*}[htbp]
	\begin{center}
		\centerline
		{\includegraphics[width=0.5\columnwidth]{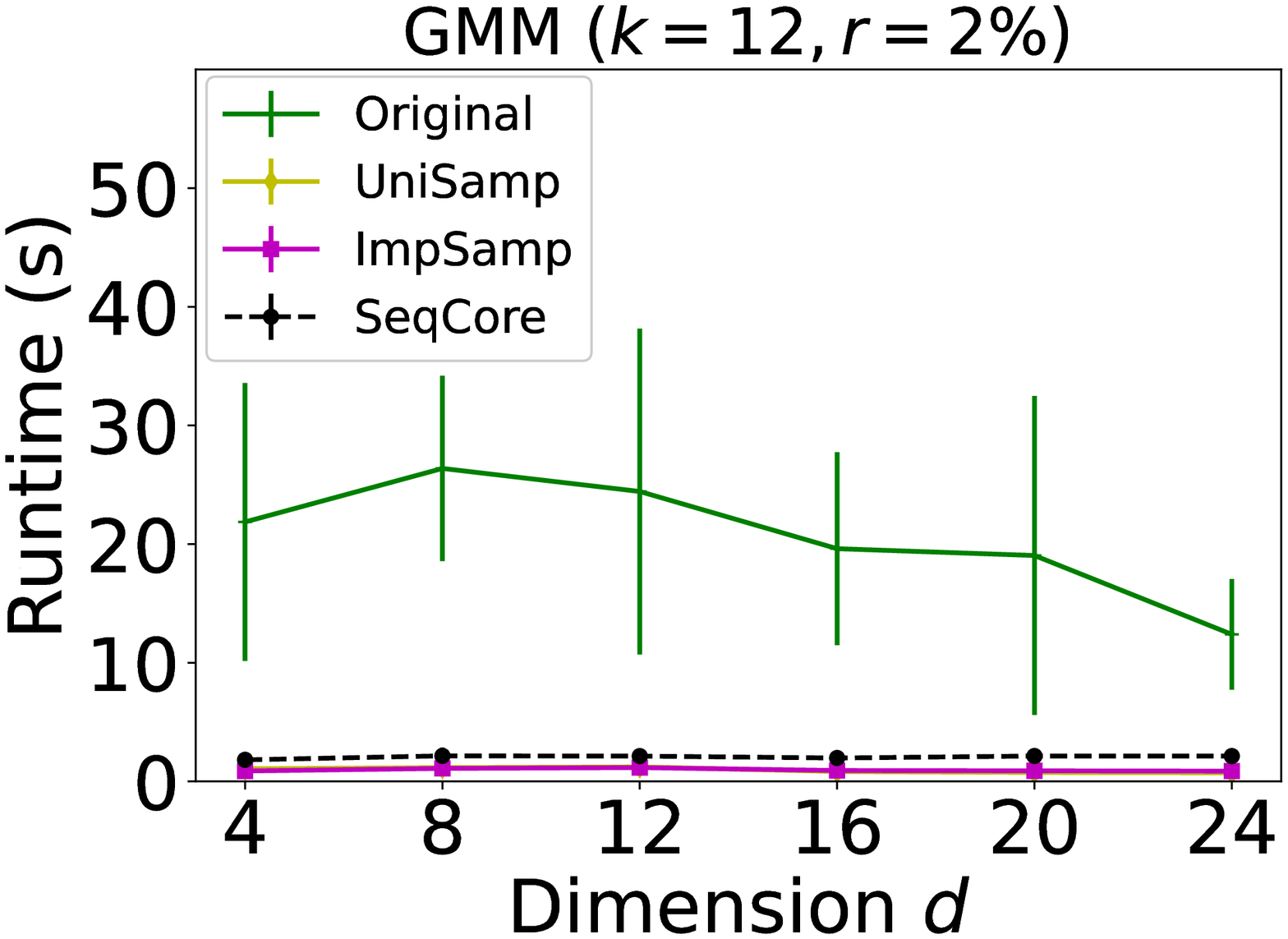} 
				\hspace{0.1in}
			\includegraphics[width=0.5\columnwidth]{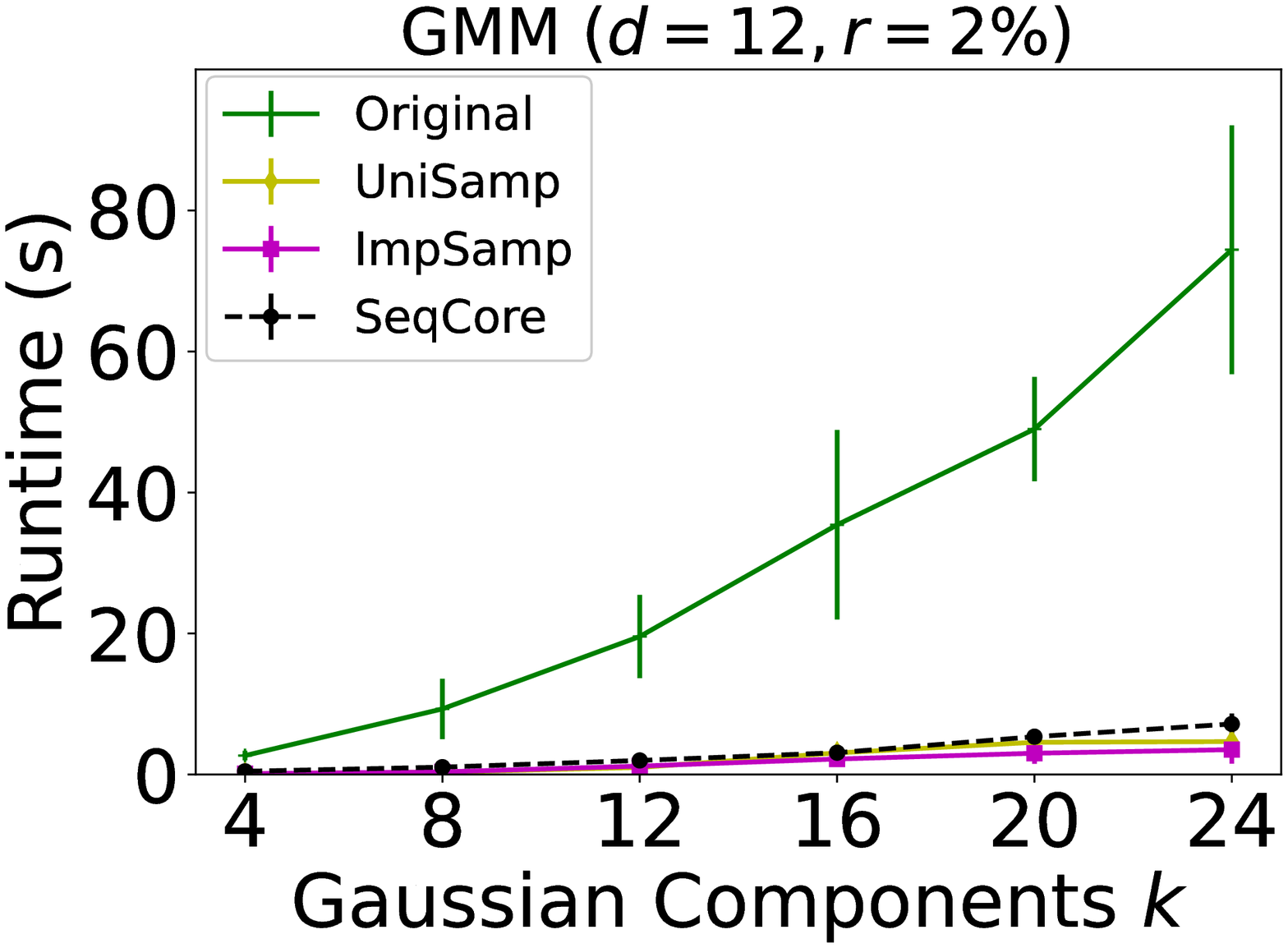}
					\hspace{0.1in}
			\includegraphics[width=0.5\columnwidth]{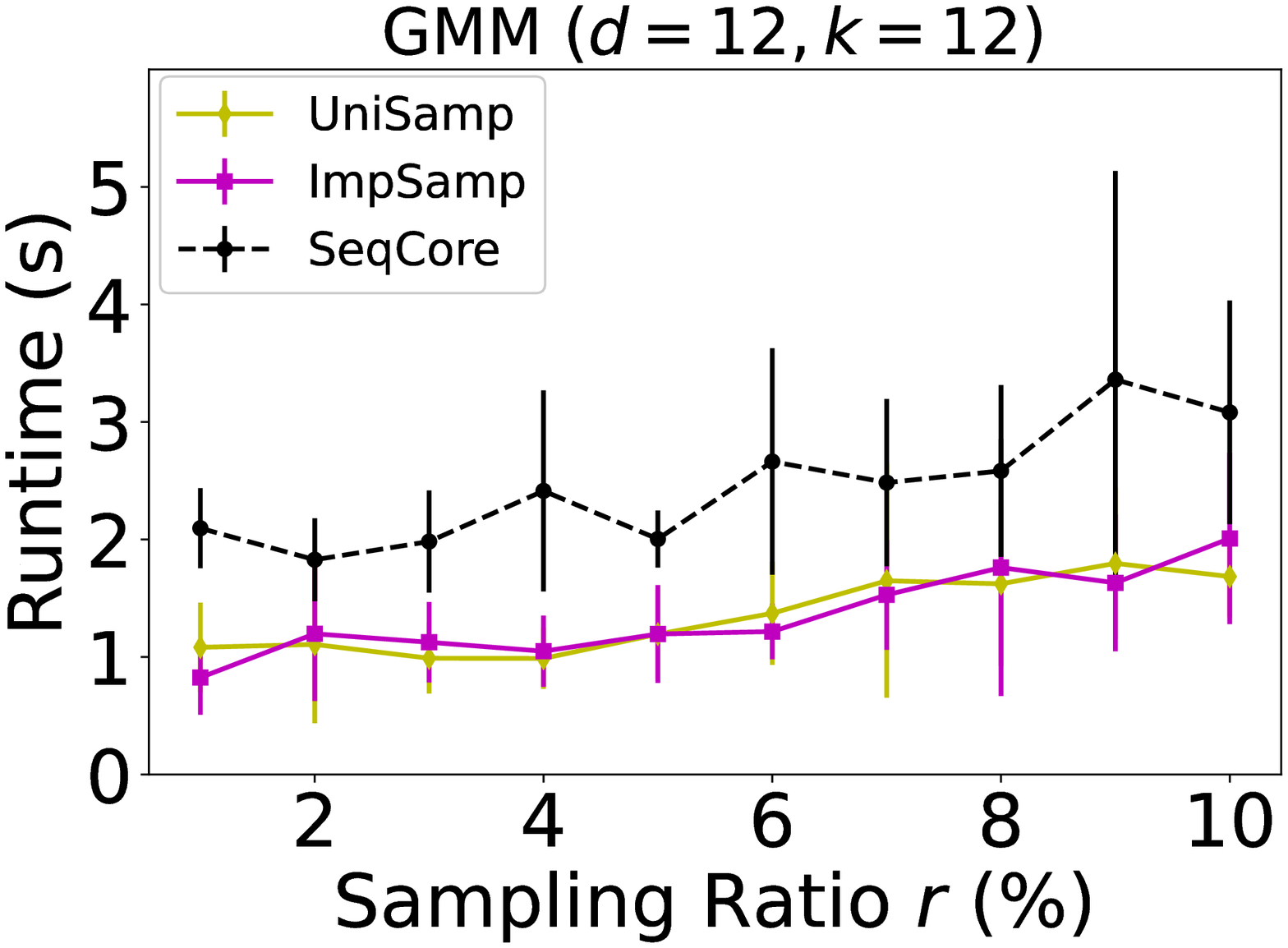}}
	\end{center}
	\vspace{-0.36in}
	\caption{The running times on \textsc{synthetic dataset} for Gaussian Mixture Models. } 
	\label{fig:GMM_synth_time}  
\end{figure*}

\begin{figure*}[htbp]
	\begin{center}
		\centerline
		{\includegraphics[width=0.5\columnwidth]{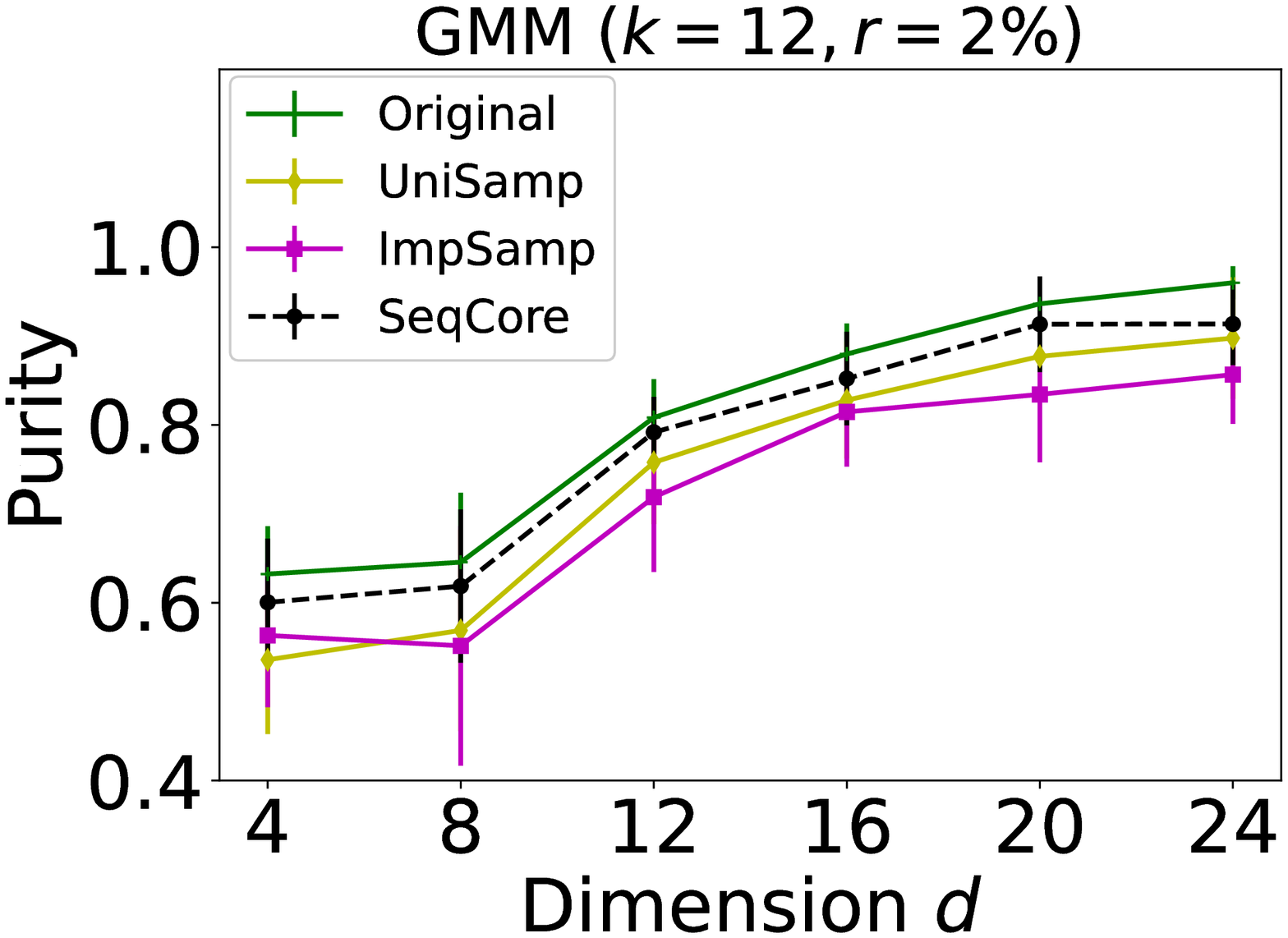} 
				\hspace{0.1in}
			\includegraphics[width=0.5\columnwidth]{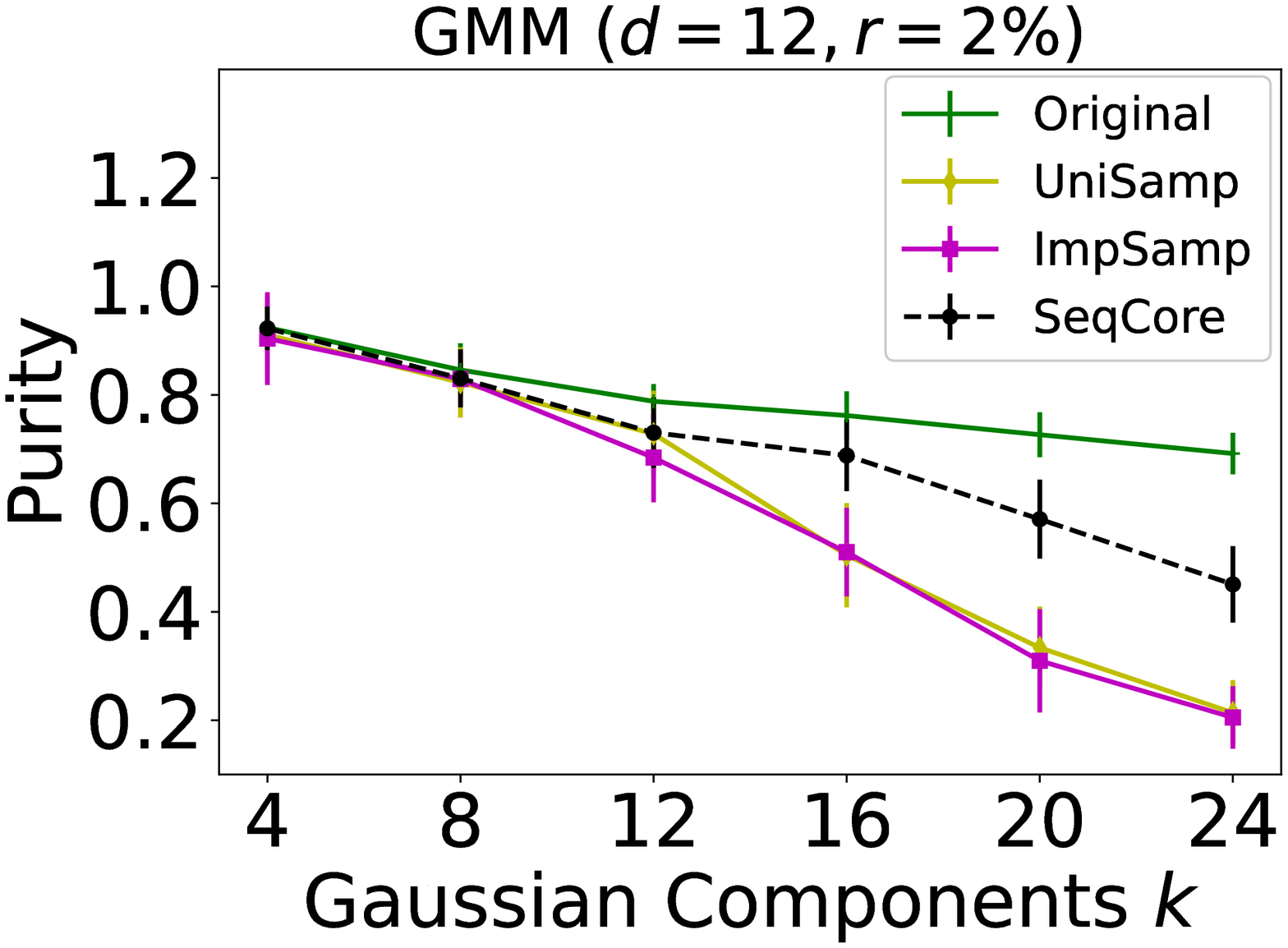}
					\hspace{0.1in}
			\includegraphics[width=0.5\columnwidth]{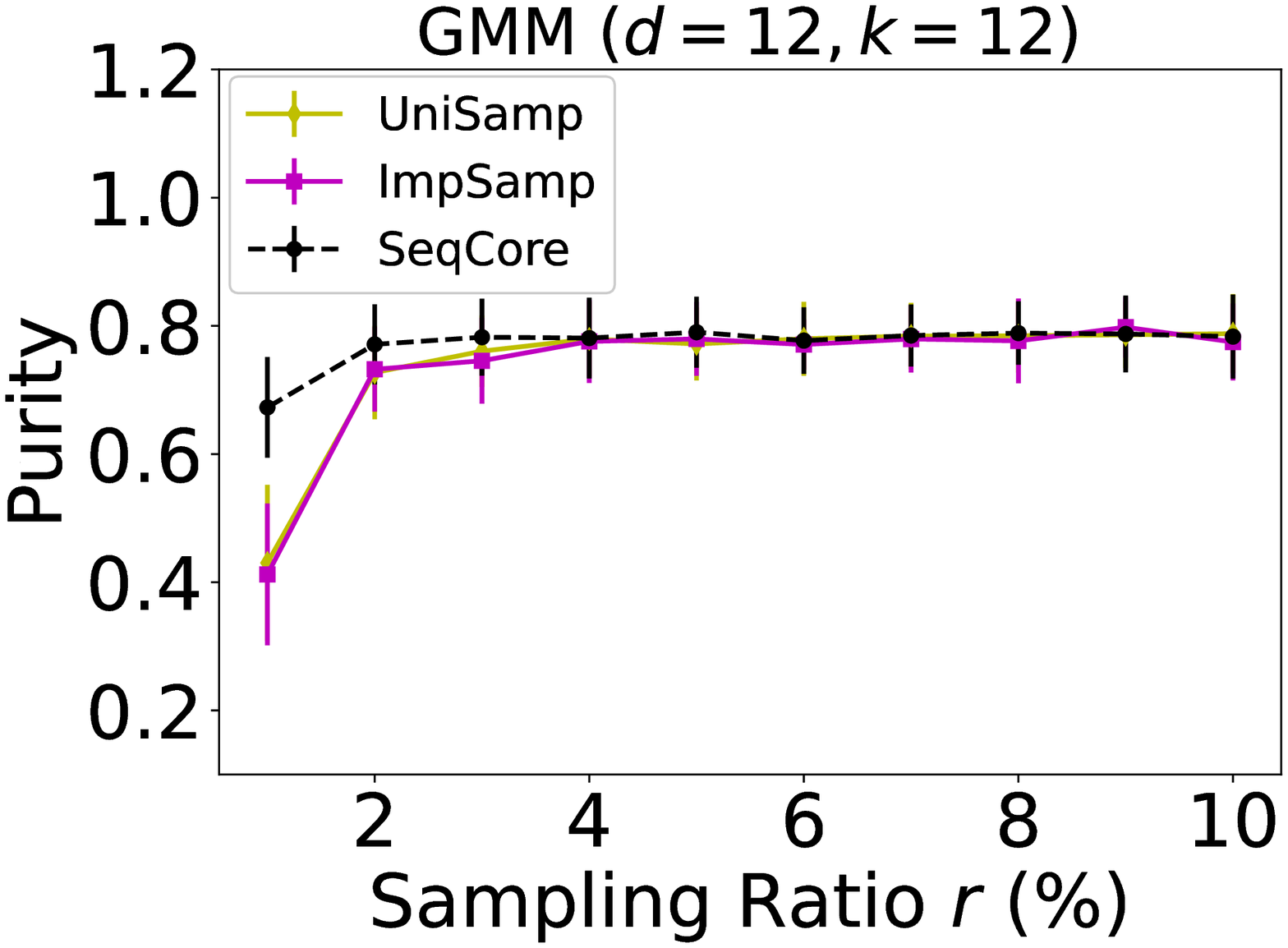}}
	\end{center}
	\vspace{-0.36in}
	\caption{The purity values on \textsc{synthetic dataset} for Gaussian Mixture Models. } 
	\label{fig:GMM_synth_purity}  
\end{figure*}

\section{Conclusions and Future Work}
Based on the simple observation of the locality property, we propose a novel sequential coreset framework for reducing the complexity of gradient descent algorithms and some relevant variants. Our framework is easy to implement and has provable quality guarantees. Following this work, it is interesting to consider building coresets for other optimization methods, such as the popular stochastic gradient descent method as well as second order methods.

\section{Acknowledgements}
The authors would like to thank Mingyue Wang and the anonymous reviewers for their helpful discussions and suggestions on improving this paper. 
This work was supported in part  by the Ministry of Science and Technology of China through grant 2019YFB2102200, the Anhui Dept. of Science and Technology through grant 201903a05020049, and Tencent Holdings Ltd through grant FR202003. 

\appendix
\section{Proof for Theorem 2}

Similar with the proof for Theorem 1, we fix a vector $\beta \in \mathbb{B}(\tilde\beta,R)$ and  $l\in\{1,\cdots,d\}$. We view $\nabla f_i(\beta)_{[l]}$ as an independent random variable for each $(x_i,y_i)\in Q_j$. 
Note that we have the bound 
$$\nabla f_i(\beta)_{[l]}\in [\min\limits_j\nabla f_j(\tilde\beta)_{[l]}-LR,\max\limits_j\nabla f(\tilde\beta)_{l}+LR]$$
by Assumption \ref{assump-lip}, where the length of the interval is at most $2M+2LR$. 
If keeping the partition of $P$ as Algorithm~\ref{alg: local coreset}, through the Hoeffding's inequality we know that 
\small{
\begin{eqnarray}
	\mathtt{Prob}\big[|\frac{1}{|{Q_j}|}\!\!\! \sum_{(x_i,y_i)\in {Q_j}}\!\!\!\!\!\nabla f_i(\beta)_{[l]}-\frac{1}{|P_j|}\!\!\! \sum_{(x_i,y_i)\in P_j}\!\!\!\!\!\nabla f_i(\beta)_{[l]}|\geq \frac{\sigma}{2}\big] \nonumber
	\end{eqnarray}
	\begin{eqnarray}
	\label{grad-bound}
        \leq \lambda 
	\end{eqnarray}
	}
	\normalsize

if the sample size $|Q_j|=\lceil 8(\frac{LR+M}{\sigma})^2\ln \frac{2}{\lambda} \rceil$. 
By taking the union bound of (\ref{grad-bound}) over $0\leq j \leq N$, we have
	\begin{equation}
	\label{equation:grad-core}
	\begin{split}
	\begin{aligned}
	&\quad n|\nabla\tilde F(\beta)_{[l]}-\nabla F(\beta)_{[l]}|\\
	&\leq \sum_{j=0}^{N}|\frac{|P_j|}{|Q_j|}\sum_{(x_i,y_i)\in Q_j}\nabla f_i(\beta)_{[l]}-\sum_{(x_i,y_i)\in P_j}\nabla f_i(\beta)_{[l]}|\\
	&\leq \sum_{j=0}^{N}|P_j|\frac{\sigma}{2}\\
	&= n\frac{\sigma}{2}\\
	\end{aligned}
	\end{split}
	\end{equation}
with probability at least $1-(N+1)\lambda$. 
That is, $|\nabla\tilde F(\beta)_{[l]}-\nabla F(\beta)_{[l]}| \leq \frac{\sigma}{2}$.

The, we apply the similar discretization idea. We build the grid with side length being equal to $\frac{\epsilon R}{\sqrt{d}}$, so as to obtain the representative points set $G$ with $|G|=O((\frac{2\sqrt{\pi e}}{\epsilon})^d)$. 
It is easy to see that (\ref{equation:grad-core}) holds for any $\beta\in G$ and any $l\in \{1,\cdots,d\}$ with probability at least $1-(N+1)d|G|\lambda$.

For any $\beta\in\mathbb{B}(\tilde\beta,R)$, we let $\beta'\in G$ be the representative point of the cell containing $\beta$. Then we have $\|\beta -\beta'\|\leq \epsilon R$. By Assumption \ref{assump-lip}, we have both 
\begin{eqnarray}
	\|\nabla F(\beta)_{[l]}-\nabla F(\beta')_{[l]}\| &\text{ and }& \|\nabla\tilde F(\beta)_{[l]}-\nabla\tilde F(\beta')_{[l]}\|\nonumber\\
	&\leq&\epsilon LR.\label{grad-grid-sum}
	\end{eqnarray}
Through the triangle inequality, we know 
\begin{eqnarray}
&&|\nabla\tilde F(\beta)_{[l]}-\nabla F(\beta){[l]}| \nonumber\\
&\leq&|\nabla\tilde F(\beta)_{[l]}-\nabla\tilde F(\beta')_{[l]}|+|\nabla\tilde F(\beta')_{[l]}-\nabla F(\beta')_{[l]}|\nonumber\\
&&+|\nabla F(\beta')_{[l]}-\nabla F(\beta)_{[l]}|\nonumber\\
&\leq& \frac{\sigma}{2}+2\times \epsilon LR,
\end{eqnarray}
where the last inequality comes from (\ref{equation:grad-core}) and (\ref{grad-grid-sum}). 
If letting $\epsilon=\frac{\sigma}{2LR}$, we have 
$$| \nabla \tilde F(\beta)_{[l]}-\nabla F(\beta)_{[l]}| \leq \sigma.$$

Recall that $|Q_j|=\lceil 8(\frac{LR+M}{\sigma})^2\ln \frac{2}{\lambda} \rceil$ and the success probability is $1-(N+1)d|G|\lambda$. Let $\lambda=\frac{1}{n(N+1)d|G|}$ and then we have the success probability being at least $1-\frac{1}{n}$.

Finally, the coreset size is 
\begin{eqnarray}
\sum^N_{j=0}|Q_j|=\tilde{O}\big((\frac{LR+M}{\sigma})^2\cdot d\big).
\end{eqnarray}
For the case that $\beta$ is restricted to have at most $k$ non-zeros entries in sparse optimizations, similar with Theorem \ref{local_coreset}, we know the coreset size can be reduced to be 
$\tilde{O}\big((\frac{LR+M}{\sigma})^2\cdot k\log d\big)$.

\section{Details in Section \ref{sec-beyond}}
\label{sec:beyond-app}
Recall that the formula (\ref{for-er-beyond}) of Section \ref{sec-beyond},
\begin{eqnarray}
F(\beta)=\frac{1}{n}\sum^n_{i=1}g(\beta, x_i, y_i)+\lambda \|\beta\|_p, \label{for-er-beyond}
\end{eqnarray}
where the function $g(\beta, x_i, y_i)$ is assumed to be differentiable and satisfy Assumption \ref{assump-lip}. Let
$$f(\beta, x_i, y_i)=g(\beta, x_i, y_i)+\lambda\|\beta\|_p.$$ 

First, we note that problem (\ref{for-er-beyond}) is usually solved by  generalizations of gradient descent method, such as subgradient methods~\cite{subgradient} and proximal gradient methods~\cite{DBLP:conf/pkdd/MosciRSVV10}. The key point is that these algorithms also enjoy the locality property described in Section~\ref{sec-our}. In (\ref{eqa: betabound}), we provide the upper and lower bounds of $f_i(\beta)$ ({\em i.e.,} $f(\beta, x_i, y_i)$) for the non-differentiable case (\ref{eq:reg}). For $0<p\leq 2$, by using the H\"{o}lder's inequality we obtain the similar bounds for non-differentiable case :
 $f_i(\beta) \in f_i(\beta_{\mathtt{anc}}) \pm  \left(\left(\frac{ \|\nabla g_i(\beta_{\mathtt{anc}})\|}{R}+\frac{L}{2}\right)R^2+\frac{\lambda d^{1/p-1/2}}{n}R\right)$.  
After replacing (\ref{eqa: betabound}) by these bounds, we can proceed the same analysis in Section~\ref{sec-ana} and attain a similar result with Theorem~\ref{local_coreset}. Here is the detailed analyse.

For any $\beta,\beta_{\mathtt{anc}}\in \mathbb{R}^d$ and $\|\beta - \beta_{\mathtt{anc}}\|_2\leq R$, we have
\begin{eqnarray}
\label{beyond-grad}
&&|f(\beta,x_i,y_i)-f(\beta_{\mathtt{anc}},x_i,y_i)|\nonumber\\
&\leq&|(g(\beta,x_i,y_i)-g(\beta_{\mathtt{anc}},x_i,y_i)| + \lambda|\|\beta\|_p-\|\beta_{\mathtt{anc}}\|_p|\nonumber\\
&\leq& \|\nabla g_i(\beta_{\mathtt{anc}})\|_2R + \frac{LR^2}{2} + \lambda\|\beta-\beta_{\mathtt{anc}}\|_p\nonumber\\
&\leq& \|\nabla g_i(\beta_{\mathtt{anc}})\|_2R + \frac{LR^2}{2} + \lambda d^{\frac{1}{p}-\frac{1}{2}}\cdot \|\beta-\beta_{\mathtt{anc}}\|_2, 
\end{eqnarray}
where the last inequality comes from H\"{o}lder's inequality. 
By (\ref{beyond-grad}), we have

$$f_i(\beta) \in f_i(\beta_{\mathtt{anc}}) \pm  \left(\left(\frac{ \|\nabla g_i(\beta_{\mathtt{anc}})\|}{R}+\frac{L}{2}\right)R^2+\ \lambda d^{1/p-1/2} R\right).$$

We define $M\coloneqq  \lambda d^{1/p-1/2} +\max\limits_{1\leq i\leq n}\|\nabla g(\beta_{\mathtt{anc}},x_i,y_i)\|$ and $m\coloneqq \!\!\! \min\limits_{\beta \in \mathbb{B}(\beta_{\mathtt{anc}},R)}F(\beta)$. Then we have the following theorem by using the same idea for Theorem \ref{local_coreset}.  
\begin{theorem}
	\label{local_coreset}
	With problability $1-\frac{1}{n}$, Algorithm \ref{alg: local coreset} returns a qualified coreset  $\mathcal{CS}_\epsilon(\beta_{\mathtt{anc}}, R)$ with size  $\tilde{O}\left(\left(\frac{H+MR+LR^2}{m}\right)^2\cdot\frac{d}{\epsilon^2}\right)$. Furthermore, when the vector $\beta$ is restricted to have at most $k\in \mathbb{Z}^+$ non-zero entries in the hypothesis space $\mathbb{R}^d$, the coreset size can be reduced to be $\tilde{O}\left(\left(\frac{H+MR+LR^2}{m}\right)^2\cdot\frac{k\log d}{\epsilon^2}\right)$. The runtime of Algorithm 1 is $O(n\cdot t_f)$, where $t_f$ is the time complexity for computing the loss $f(\beta, x, y)$. 	
\end{theorem}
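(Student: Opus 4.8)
The plan is to reduce this statement to the already-established differentiable case (call it Theorem 1) by showing that the only ingredient that changes is the per-point deviation bound, and that after folding the penalty's contribution into the redefined constant $M = \lambda d^{1/p-1/2} + \max_i \|\nabla g_i(\beta_{\mathtt{anc}})\|$, every structural inequality used in the proof of Theorem 1 is recovered verbatim. In the differentiable case the whole argument hinges on \eqref{eq:Lip-bound}, which yields $|f_i(\beta) - f_i(\beta_{\mathtt{anc}})| \leq MR + \frac{1}{2}LR^2$ for every $\beta \in \mathbb{B}(\beta_{\mathtt{anc}}, R)$. I would first establish the analogue of this bound for the non-differentiable objective: starting from the H\"older estimate \eqref{beyond-grad} I obtain $|f_i(\beta) - f_i(\beta_{\mathtt{anc}})| \leq \|\nabla g_i(\beta_{\mathtt{anc}})\| R + \frac{1}{2}LR^2 + \lambda d^{1/p-1/2} R$, and grouping the first and third terms gives exactly $|f_i(\beta) - f_i(\beta_{\mathtt{anc}})| \leq MR + \frac{1}{2}LR^2$ with the new $M$. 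Thus the single inequality on which the entire proof rests has an identical form.

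Given this, the next step is to observe that the layer decomposition and the concentration arguments carry over with no modification, since they depend on the data only through the per-point bound just established. Concretely, the per-layer range estimates \eqref{for-uplow}, Hoeffding's inequality in Lemma~\ref{lemma_hoeffding} with sample size $|Q_j| = O((2^{j-1}H + MR + LR^2)^2 \delta^{-2}\log\frac{1}{\lambda})$, the aggregation in Lemma~\ref{lemma2}, the counting Claim~\ref{claim_1}, and the union bound over the representative set in Lemma~\ref{lem-union} all go through unchanged with the redefined $M$.

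The step requiring genuine care, and the one I expect to be the main obstacle, is the discretization that extends the guarantee from the grid set $G$ to all of $\mathbb{B}(\beta_{\mathtt{anc}}, R)$. In the differentiable case this used $M' := \max_i \max_{\beta \in \mathbb{B}}\|\nabla f_i(\beta)\| \leq M + LR$ to control $|f_i(\beta) - f_i(\beta')|$ for neighbouring $\beta, \beta'$, but here $f$ itself is not differentiable. I would instead apply the H\"older bound to the pair $(\beta, \beta')$ with $\|\beta - \beta'\| \leq \epsilon_2 R$, giving $|f_i(\beta) - f_i(\beta')| \leq \|\nabla g_i(\beta')\| \epsilon_2 R + \frac{1}{2}L\epsilon_2^2 R^2 + \lambda d^{1/p-1/2}\epsilon_2 R$. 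The key check is that the effective coefficient of $\epsilon_2 R$, namely $\max_i \max_{\beta \in \mathbb{B}}\|\nabla g_i(\beta)\| + \lambda d^{1/p-1/2}$, is again bounded by $M + LR$: Assumption~\ref{assump-lip} gives $\max_i\max_{\beta}\|\nabla g_i(\beta)\| \leq \max_i\|\nabla g_i(\beta_{\mathtt{anc}})\| + LR$, so adding $\lambda d^{1/p-1/2}$ reproduces exactly the constant $M + LR$ of the differentiable analysis. Consequently the discretization inequalities \eqref{for-sum-1} hold with the same constants, and the triangle-inequality combination with Lemma~\ref{lem-union} yields $|\tilde F(\beta) - F(\beta)| \leq \epsilon F(\beta)$ under the identical choices of $\epsilon_1$ and $\epsilon_2$.

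Finally, since the sample sizes $|Q_j|$ and the grid cardinality $|G|$ from \eqref{eq:ball} are unchanged, the coreset-size computation \eqref{for-ana-coresetsize} reproduces $\tilde O((\frac{H + MR + LR^2}{m})^2 \frac{d}{\epsilon^2})$; the sparse refinement via the union of $\binom{d}{k}$ $k$-dimensional balls gives the $\tilde O((\frac{H+MR+LR^2}{m})^2 \frac{k\log d}{\epsilon^2})$ bound; and the runtime stays $O(n \cdot t_f)$ because the construction still queries each $f_i(\beta_{\mathtt{anc}})$ exactly once. In short, the whole proof is a constant-redefinition reduction to Theorem 1, with the one nontrivial point being the H\"older handling of the non-smooth penalty in the discretization step.
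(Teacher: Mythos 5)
Your proposal is correct and follows essentially the same route as the paper: the appendix proves this theorem precisely by deriving the H\"older-based per-point bound $f_i(\beta)\in f_i(\beta_{\mathtt{anc}})\pm\bigl(\|\nabla g_i(\beta_{\mathtt{anc}})\|R+\tfrac{1}{2}LR^2+\lambda d^{1/p-1/2}R\bigr)$, absorbing the penalty term into the redefined $M$, and then invoking the analysis of Section~\ref{sec-ana} verbatim. Your explicit verification that the discretization step still works with $M'\leq M+LR$ (applying the H\"older bound to neighbouring grid points rather than differentiating $f$) is a detail the paper leaves implicit, but it does not change the argument.
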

\section{Assumption \ref{assump-lip} for GMM}
We show that the objective function of GMM training  satisfies Assumption \ref{assump-lip}. 
 In GMM,
$$f(x_i, \beta)=-\log\Big(\sum^k_{j=1}\omega_j N(x_i, \mu_j, \Sigma_j)\Big)$$
where $N(x_i, \mu_j, \Sigma_j)=\frac{1}{\sqrt{(2\pi)^D|\Sigma_j|}}\exp(-\frac{1}{2}(x_i-\mu_j)^T\Sigma^{-1}_j(x_i-\mu_j))$,$\beta \coloneqq [(\omega_1, \mu_1, \Sigma_1^{-1}), \hdots, (\omega_k, \mu_k, \Sigma_k^{-1})]$,
$\sum_{j=1}^k\omega_j=1$. For simplicity, we use $p_{ij}$ to denote $\mathcal{N}(x_i, \mu_j, \Sigma_j)$. We define  $\tilde m\coloneqq\min\limits_{1\leq l\leq k, 1\leq i\leq n}p_{il}$ and $\tilde M\coloneqq\max\limits_{1\leq l\leq k, 1\leq i\leq n}p_{il}$.

Here we assume that the Gaussian models are $\lambda$-semi-sphericial (following the assumption in~\cite{lucic2017training}), which means $\Sigma_j$ has eigenvalues bounded in $[\lambda,\frac{1}{\lambda}]$ for $1\leq j\leq k$. Also we assume that for any $j\in\{1,\cdots,k\}$, $\|x_i - \mu_j\|\leq r$ for some $r>0$. Therefore we know $\tilde m\geq (\frac{\lambda}{2\pi})^{\frac{D}{2}}e^{-\frac{r^2}{2\lambda}}$, $\tilde M\leq (\frac{1}{2\pi\lambda})^{\frac{D}{2}}e^{-\frac{\lambda r^2}{2}}. $ 
Also, we have the following equations:
\begin{eqnarray}
\frac{\partial f_i(\beta) }{\partial \mu_{j}}&=&\Sigma_{j}\left(x_{i}-\mu_{j}\right) \gamma_{i j};\\
\frac{\partial f_i(\beta) }{\partial \Sigma^{-1}_{j}}&=&\left(\Sigma_{j}-\left(x_{i}-\mu_{j}\right)\left(x_{i}-\mu_{j}\right)^{T}\right) \gamma_{i j};\\
\frac{\partial f_i(\beta) }{\partial \omega_{j}}&=&\omega_{j}^{-1} \gamma_{i j}.
\end{eqnarray}

Here $\{\gamma_{i1}, \cdots, \gamma_{ik}\}$ are the GMM responsibilities for $f_i$. Then we have $\gamma_{ij}=\frac{\omega_ip_{ij}}{\omega_1p_{i1}+\cdots+\omega_kp_{ik}}$. Thus we have 
\begin{eqnarray}
\label{bound-mu}
 \|\frac{\partial f_i(\beta) }{\partial \mu_{j}}\|_2&=&\gamma_{ij}\|\Sigma_{j}\left(x_{i}-\mu_{j}\right)\|_2\leq\frac{r}{\lambda};
\end{eqnarray}

\begin{eqnarray}
\label{bound-Sigma}
\|\frac{\partial f_i(\beta) }{\partial \Sigma^{-1}_{j}}\|_F
&=&\gamma_{i j}\|\Sigma_{j}-\left(x_{i}-\mu_{j}\right)\left(x_{i}-\mu_{j}\right)^{T}\|_F\nonumber\\
&\leq& \|\Sigma_j\|_F+\|\left(x_{i}-\mu_{j}\right)\left(x_{i}-\mu_{j}\right)^{T}\|_F\nonumber\\
&\leq&\frac{\sqrt{D}}{\lambda}+r^2;
\end{eqnarray}

\begin{eqnarray}
\label{bound-omega}
\frac{\partial f_i(\beta) }{\partial \omega_{j}}=\frac{p_{ij}}{\omega_1p_{i1}+\cdots+\omega_kp_{ik}}\leq \frac{\tilde M}{\tilde m};
\end{eqnarray}

From (\ref{bound-omega}), (\ref{bound-Sigma}) and (\ref{bound-mu}) we have 
$$\|\nabla f_i(\beta)\|\leq\sqrt{k\left(\frac{\tilde M^2}{\tilde m^2}+\frac{r^2}{\lambda^2}+(\frac{\sqrt{D}}{\lambda}+r^2)^2\right)}.$$

So $f_i$ is Lipschitz continous, {\em i.e.,}
\begin{eqnarray}
\label{GMM-bound}
|f_i(\beta_1) - f_i(\beta_2)|\leq \sqrt{k\left(\frac{\tilde M^2}{\tilde m^2}+\frac{r^2}{\lambda^2}+(\frac{\sqrt{D}}{\lambda}+r^2)^2\right)}\|\beta_1-\beta_2\| .
\end{eqnarray}
We can use (\ref{GMM-bound}) to replace the bound implied from Assumption~\ref{assump-lip} in Section \ref{sec-construction}, and thus the same analysis and results hold for GMM.

\bibliography{SeqCoreset-arxiv}
\bibliographystyle{plain}
\end{document}